\documentclass{article}

\usepackage[preprint]{neurips_2026}
\usepackage[utf8]{inputenc}
\usepackage[T1]{fontenc}
\usepackage{hyperref}
\usepackage{url}
\usepackage{booktabs}
\usepackage{amsfonts}
\usepackage{nicefrac}
\usepackage{xcolor}
\usepackage{graphicx}
\usepackage{subcaption}
\usepackage{enumitem}
\usepackage{amsmath}
\usepackage{amssymb}
\usepackage{amsthm}
\usepackage{mathtools}

\theoremstyle{plain}
\newtheorem{theorem}{Theorem}[section]
\newtheorem{proposition}[theorem]{Proposition}

\theoremstyle{definition}
\newtheorem{definition}[theorem]{Definition}
\theoremstyle{remark}
\newtheorem{remark}[theorem]{Remark}

\graphicspath{{images/}}

\newcommand{\effrank}{\mathrm{eff\text{-}rank}}
\newcommand{\Ncausal}{N_{\mathrm{causal}}}
\newcommand{\Nrepr}{N_{\mathrm{repr}}}

\newcommand{\calT}{\mathcal{T}}
\newcommand{\calV}{\mathcal{V}}

\title{Causal Dimensionality of Transformer Representations:\\
       Measurement, Scaling, and Layer Structure}

\author{%
  Nilesh Sarkar\\
  Erd\H{o}s AI Lab\\
  \And
  Dawar Jyoti Deka\\
  Erd\H{o}s AI Lab
}

\begin{document}
\maketitle

\begin{abstract}
Sparse autoencoders (SAEs) decompose transformer residual streams into
interpretable feature dictionaries, yet the relationship between SAE
width and \emph{causal} influence on model output has not been
systematically characterised. We introduce \emph{causal dimensionality}
$\kappa(L, M, \calT)$, defined as the effective rank of the expected
Jacobian outer product at layer $L$, and show it can be estimated via
the SAE width sweep paired with attribution patching. Across seven SAE
widths from $16{,}384$ to $1{,}048{,}576$ features on Gemma-2-2B
layer~12, representational capacity grows $15.6{\times}$ while causal
capacity grows only $4.35{\times}$: a robust separation we term the
\emph{representational--causal wedge}. A saturating fit yields
$\hat{\kappa} \approx 1{,}990$ with $\hat{\kappa}/d_{\mathrm{model}}
= 0.86$ and participation-ratio lower bound $\kappa_{\mathrm{PR}}
\approx 280$. Crucially, $\kappa$ is invariant to model scaling:
Gemma-2-9B and Gemma-2-2B yield identical $\Ncausal = 328$ at the
same SAE width despite a $3.46{\times}$ parameter increase
(the count is forced to $2\%$ of SAE width by calibration; the
substantive empirical claim is shape invariance of the AtP score
distribution under matched seq=512 conditions). Across
eight network depths $\kappa$ is constant while the absolute attribution
threshold drops $20{\times}$ from layer~1 to layer~23. Five controls
(architecture invariance, threshold robustness, geometric privilege,
synthetic ground-truth recovery, and a four-cell encoder/decoder
ablation) pin down what $\kappa$ measures and what it does not.
Our findings establish $\kappa$ as a measurable, model-intrinsic
property of transformer layers: sub-linearly recoverable by SAE width,
invariant to model scaling, and structured across network depth.
\end{abstract}

\section{Introduction}
\label{sec:intro}

Sparse autoencoders trained on transformer residual streams recover
ever larger dictionaries of monosemantic features as their width grows
\citep{gao2024scaling,templeton2024scaling,bricken2023monosemanticity}.
An assumption implicit in current practice is that representational
capacity (features that fire on any input) tracks causal capacity
(features whose perturbation changes model behaviour). If true, a wider
SAE would proportionally expand the causally relevant feature set
available for interpretability work.

Our central empirical finding is that this assumption fails drastically.
On Gemma-2-2B layer~12, a $64{\times}$ width increase ($16{,}384
\to 1{,}048{,}576$) expands the firing-feature count by $15.6{\times}$,
but expands the causally active count by only $4.35{\times}$. The
curves diverge: we call this the \emph{representational--causal wedge}.

The right object to study is not a feature count at fixed width, but
a model-intrinsic quantity we call the \emph{causal dimensionality}
$\kappa$ of a layer: the effective rank of the expected Jacobian outer
product, defined in Section~\ref{sec:theory}. The SAE width sweep is a
consistent estimator of $\kappa$ under three stated assumptions. The
wedge is then a natural prediction: $\Ncausal$ saturates to $\kappa$
while $\Nrepr$ grows linearly, absorbing representational variance after
the causal subspace is full.

Two further predictions follow. First, if $\kappa$ is layer-intrinsic,
scaling the model should not change it at fixed relative depth. We
confirm this on Gemma-2-9B vs Gemma-2-2B: identical $\Ncausal = 328$
despite $3.46{\times}$ more parameters. Second, attribution scores
should attenuate with depth as loss gradients flatten. We confirm
$\varepsilon_{\mathrm{abs}}$ drops $20{\times}$ from layer~1 to
layer~23.

\paragraph{Contributions.}
\begin{itemize}[leftmargin=1.2em,itemsep=0pt,topsep=2pt]
  \item We define $\kappa$ and prove $\kappa \leq \min(d,|\calV|)$
        (Proposition~\ref{prop:upper_bound}).
  \item We prove the SAE width sweep is a consistent estimator of
        the spectral count $r_{\varepsilon'}(\boldsymbol{\Sigma}_L)$,
        which lower-bounds $\kappa$ and approaches $\kappa$ in the
        uniform-spectrum limit (Proposition~\ref{prop:consistency}).
  \item We measure $\hat{\kappa}\!\approx\!1{,}990$ for Gemma-2-2B
        layer~12 ($\hat\kappa/d\!\in\![0.77,0.86]$ across two fit
        families) and demonstrate shape invariance of the AtP
        distribution under $3.46{\times}$ model scaling
        (\S\ref{sec:results}, Figure~\ref{fig:scale}).
  \item We profile $\kappa$ across eight depths; the absolute
        attribution threshold drops $20{\times}$ (Figure~\ref{fig:layer}).
  \item \textbf{Most surprising mechanistic finding.} A four-cell
        encoder/decoder ablation (\S\ref{sec:exp5b}) shows that
        replacing the trained encoder with a random orthonormal
        projection \emph{inflates} $\Ncausal$ by $9.27{\times}$:
        the trained encoder acts as a \emph{sparsity filter}
        (suppressor of background features), not a feature selector.
        This inverts the standard reading of "what the encoder does"
        and suggests interpretability work should focus on decoder
        structure rather than encoder feature labels.
  \item Cross-family wedge replication on LLaMA-3.1-8B
        (Appendix~\ref{app:cross_family}) and four further controls
        pin down what $\kappa$ measures (\S\ref{sec:controls}).
\end{itemize}

\section{Background and Related Work}
\label{sec:background}

\paragraph{Sparse autoencoders.}
SAEs trained on residual stream activations are the standard tool for
decomposing dense representations \citep{bricken2023monosemanticity,
templeton2024scaling,gao2024scaling}. Wider SAEs recover finer-grained
features and exhibit feature splitting \citep{chanin2024absorption}.
Quality benchmarks have begun to emerge \citep{karvonen2025saebench}.

\paragraph{Attribution patching and circuits.}
Activation patching \citep{wang2023interpretability} measures
counterfactual effects; attribution patching (AtP) \citep{nanda2022attribution}
is its first-order linearisation, with calibration established by
\citet{kramar2024atp}. \citet{conmy2023automated} and
\citet{marks2024sparse} identify small task-specific feature sets.

\paragraph{Representational vs causal dimensionality.}
Prior work measures the intrinsic dimension of transformer
representations \citep{ansuini2019intrinsic,cheng2025emergence,feng2022rank},
estimating dimensions for \emph{reconstruction}. Our target is the
\emph{causal} Jacobian: directions that actually change output.
The closest prior framing is the causal inner product of
\citet{park2024linear}; we connect $\kappa$ to their construction in
Remark~\ref{rem:lrh}. \citet{michaud2023quantization} predict knowledge
decomposes into quanta set by data diversity; we show causal directions
are similarly bounded by task diversity, not parameter count.

\section{Causal Dimensionality: Definition, Bounds, and Estimation}
\label{sec:theory}

A transformer layer operates on $\mathbb{R}^d$. Perturbations to most
directions leave output unchanged; perturbations to a small subset
cause measurable shifts. The size of that causally active subset is
what we call the causal dimensionality $\kappa$.

\subsection{The Causal Jacobian and Its Effective Rank}
\label{sec:causal_jacobian}

Let $M$ be a transformer with $L_{\max}$ layers. For input $x$, let
$\mathbf{h}_L(x) \in \mathbb{R}^d$ be the residual stream at layer $L$
and $\mathbf{y}(x) \in \mathbb{R}^{|\mathcal{V}|}$ the output logits.

\begin{definition}[Layer Causal Jacobian]
\label{def:jacobian}
$\mathbf{J}_L(x) = \partial\,\mathbf{y}(x)/\partial\,\mathbf{h}_L(x)
\in \mathbb{R}^{|\mathcal{V}| \times d}$.
A direction $\mathbf{v}$ is \emph{causally inert} if
$\mathbf{J}_L(x)\mathbf{v} = \mathbf{0}$.
\end{definition}

\begin{definition}[Causal Dimensionality]
\label{def:kappa}
\begin{equation}
  \kappa(L,M,\mathcal{T})
  = \mathrm{eff\text{-}rank}(\boldsymbol{\Sigma}_L),
  \quad
  \boldsymbol{\Sigma}_L
  = \mathbb{E}_{x\sim\mathcal{T}}\bigl[\mathbf{J}_L(x)^\top
    \mathbf{J}_L(x)\bigr],
  \label{eq:kappa}
\end{equation}
where $\mathrm{eff\text{-}rank}(\mathbf{A})
= \exp\!\bigl(-\sum_i\bar\lambda_i\ln\bar\lambda_i\bigr)$
is the exponential Shannon entropy of the normalised eigenspectrum
\citep{roy2007effective}.
\end{definition}

\subsection{Bounds and Theoretical Properties}
\label{sec:bounds}

\begin{proposition}[Dimension Upper Bound]
\label{prop:upper_bound}
$\kappa(L,M,\mathcal{T}) \leq \min(d,|\mathcal{V}|)$.
\end{proposition}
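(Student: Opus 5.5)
The argument has two steps: bound the rank of $\boldsymbol{\Sigma}_L$, then bound the effective rank of any positive semidefinite matrix by its rank.

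\emph{Step 1 (rank of $\boldsymbol{\Sigma}_L$).} For each $x$, the matrix $\mathbf{J}_L(x)^\top\mathbf{J}_L(x)\in\mathbb{R}^{d\times d}$ is positive semidefinite, and its rank is at most $\operatorname{rank}\mathbf{J}_L(x)\le\min(d,|\mathcal{V}|)$. Hence $\boldsymbol{\Sigma}_L$ is a $d\times d$ positive semidefinite matrix with $\operatorname{rank}\boldsymbol{\Sigma}_L\le d$ trivially.

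The bound by $|\mathcal{V}|$ needs more care, because a sum or expectation of rank-$r$ PSD matrices can have rank larger than $r$. I would handle this as follows. The range of $\boldsymbol{\Sigma}_L$ is the closed span of the row spaces of $\mathbf{J}_L(x)$ over $x$ in the support of $\mathcal{T}$. If that span is not confined to a common $|\mathcal{V}|$-dimensional subspace, the rank of $\boldsymbol{\Sigma}_L$ can exceed $|\mathcal{V}|$ whenever $d>|\mathcal{V}|$.

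In the regime the paper cares about, $|\mathcal{V}|\ge d$ (Gemma has $|\mathcal{V}|\approx 256\text{k}\gg d$). There $\min(d,|\mathcal{V}|)=d$, and the bound $\operatorname{rank}\boldsymbol{\Sigma}_L\le d$ suffices. For the case $|\mathcal{V}|<d$ I would argue structurally. The logits factor as $\mathbf{y}=\mathbf{W}_U\,\phi(\mathbf{h}_{L_{\max}})$, so $\mathbf{J}_L(x)=\mathbf{W}_U\mathbf{D}(x)$ for some $d\times d$ matrix $\mathbf{D}(x)$. Its row space therefore lies in the image of $\mathbf{D}(x)^\top$ applied to $\operatorname{row}(\mathbf{W}_U)$. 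Because $\mathbf{D}(x)$ varies with $x$, this subspace is not fixed. This is the main obstacle. I would either state the per-input version $\operatorname{eff\text{-}rank}(\mathbf{J}_L(x)^\top\mathbf{J}_L(x))\le\min(d,|\mathcal{V}|)$, or add the assumption that the row spaces share a common subspace of dimension at most $|\mathcal{V}|$, under which $\operatorname{rank}\boldsymbol{\Sigma}_L\le\min(d,|\mathcal{V}|)$ holds exactly.

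\emph{Step 2 (effective rank at most rank).} Let $r=\operatorname{rank}\boldsymbol{\Sigma}_L$. The normalised spectrum $\bar\lambda_i=\lambda_i/\sum_j\lambda_j$ is a probability vector supported on at most $r$ indices, with the convention $0\ln 0=0$. Shannon entropy on $r$ atoms is maximised by the uniform distribution, which can be shown via Jensen's inequality on the concave function $\ln$. So $-\sum_i\bar\lambda_i\ln\bar\lambda_i\le\ln r$, and exponentiating gives $\operatorname{eff\text{-}rank}(\boldsymbol{\Sigma}_L)\le r$.

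Combining the two steps yields $\kappa\le r\le\min(d,|\mathcal{V}|)$. Equality holds iff the nonzero eigenvalues are equal, which matches the paper's uniform-spectrum limit. The degenerate case $\boldsymbol{\Sigma}_L=0$ has to be excluded or assigned $\kappa=0$ by convention.
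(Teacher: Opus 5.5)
Your two-step structure matches the paper's: bound $\operatorname{rank}\boldsymbol{\Sigma}_L$, then use $\mathrm{eff\text{-}rank}\le\operatorname{rank}$. Your Step~2, that a probability vector on at most $r$ atoms has entropy at most $\ln r$, is correct and proves a lemma the paper only asserts. Flagging $\boldsymbol{\Sigma}_L=0$, where the normalised spectrum is undefined, also covers a case the paper ignores.

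You stall on the $|\mathcal{V}|$ half of the bound, but the fault lies with the statement, not your argument, and you can settle it instead of hedging. Take $|\mathcal{V}|=1$, $d=2$, and $\mathcal{T}$ uniform on two inputs with $\mathbf{J}_L(x_1)=(1,0)$ and $\mathbf{J}_L(x_2)=(0,1)$. Then $\boldsymbol{\Sigma}_L=\tfrac12\mathbf{I}_2$ and $\kappa=2>1=\min(d,|\mathcal{V}|)$.

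This is not an artefact of arbitrary Jacobians; it is exactly your $\mathbf{W}_U\mathbf{D}(x)$ obstruction realised. Suppose $\mathbf{h}_L$ feeds straight into a final RMSNorm and a one-row unembedding $\mathbf{w}^\top$. Then the single row of $\mathbf{J}_L(x)$ is a multiple of $\bigl((\mathbf{I}-\hat{\mathbf{h}}\hat{\mathbf{h}}^\top)\operatorname{diag}(\boldsymbol{\gamma})\mathbf{w}\bigr)^\top$ with $\hat{\mathbf{h}}=\mathbf{h}_L(x)/\|\mathbf{h}_L(x)\|$. That row rotates with the input, so two generic inputs already give $\operatorname{rank}\boldsymbol{\Sigma}_L=2$ and $\kappa>1$.

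The paper's proof hits the same wall. It says the expectation ``inherits the ambient-space bound $d$ but \emph{not} the per-input bound $|\mathcal{V}|$.'' It then, ``combining the two,'' concludes $\operatorname{rank}\boldsymbol{\Sigma}_L\le\min(d,|\mathcal{V}|)$, which does not follow.

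What both arguments actually prove is $\kappa\le\operatorname{rank}\boldsymbol{\Sigma}_L\le d$. This coincides with the stated bound exactly when $|\mathcal{V}|\ge d$, which includes the Gemma case where the paper notes $d$ is binding. So drop the per-input variant, which is no longer a statement about $\kappa$. Present the result either as $\kappa\le d$, or as $\kappa\le\min(d,|\mathcal{V}|)$ under the hypothesis $|\mathcal{V}|\ge d$ or your common-subspace hypothesis. The counterexample shows that some such hypothesis is necessary.
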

\begin{proof}
$\boldsymbol{\Sigma}_L \in \mathbb{R}^{d\times d}$ is symmetric PSD, so
it has at most $d$ positive eigenvalues, giving
$\mathrm{rank}(\boldsymbol{\Sigma}_L)\leq d$. For any single $x$, the
rank of $\mathbf{J}_L(x)^\top \mathbf{J}_L(x)$ is at most
$\min(d,|\mathcal{V}|)$. The expectation $\boldsymbol{\Sigma}_L =
\mathbb{E}_{x\sim\mathcal{T}}[\,\cdot\,]$ inherits the ambient-space
bound $d$ but \emph{not} the per-input bound $|\mathcal{V}|$, since the
rank of a sum can exceed that of summands. Combining the two,
$\kappa = \mathrm{eff\text{-}rank}(\boldsymbol{\Sigma}_L) \leq
\mathrm{rank}(\boldsymbol{\Sigma}_L) \leq \min(d,|\mathcal{V}|)$. For
Gemma-2-2B ($d{=}2304$, $|\mathcal{V}|{=}256{,}000$) the binding bound
is $d$.
\end{proof}

For Gemma-2-2B, $d=2304$ and $|\mathcal{V}|=256{,}000$, so $\kappa
\leq 2304$. Our estimate $\hat\kappa \approx 1{,}990$ sits just below
this ceiling: the model uses nearly all its representational dimensions
causally on generic text. In sparser, task-specific settings $\kappa
\ll d$ is expected, due to superposition \citep{elhage2022superposition}.

\begin{remark}[Circuit Decomposition Bound]
\label{rem:circuit_decomp}
If layer $L$ implements $k$ approximately-independent circuits with
mean per-circuit Jacobian rank $\bar{r}$, then by sub-additivity of
rank under PSD addition, $\kappa(L) \leq k\cdot\bar{r}$. This bound
underlies our reading of $\hat\kappa$ as the union upper-bound on
concurrent task circuits at $L$.
\end{remark}

\begin{remark}[Connection to the Linear Representation Hypothesis]
\label{rem:lrh}
\citet{park2024linear} define the causal inner product
$\langle\mathbf{u},\mathbf{v}\rangle_c =
\mathbf{u}^\top\boldsymbol{\Sigma}_L\mathbf{v}$ and prove separable
concepts correspond to $\langle\cdot,\cdot\rangle_c$-orthogonal
directions. Under this framing $\kappa$ is the effective dimension of
$\mathbb{R}^d$ under the causal inner product. The wedge we observe
is precisely the gap between representational and computational needs:
a population-level instance of superposition \citep{elhage2022superposition}.
\end{remark}

\subsection{Attribution Patching as a First-Order Proxy}
\label{sec:atp}

Computing $\boldsymbol{\Sigma}_L$ exactly requires $d$ backward passes
per input (2,300 for Gemma-2-2B): intractable at scale. We use
attribution patching (AtP; \citealt{nanda2022attribution}) as a
first-order surrogate:
\begin{equation}
  \text{AtP}(L)
  = (\mathbf{h}_L^c - \mathbf{h}_L^r)
    \odot \nabla_{\mathbf{h}_L^r}\mathcal{L},
  \label{eq:atp}
\end{equation}
at a cost of two forward passes plus one backward.
\citet{kramar2024atp} report $\rho \in [0.85, 0.99]$ with exact
patching; our implementation achieves $\rho = 0.838$ globally and
$\rho > 0.95$ within the top-$5\%$ (Appendix~\ref{app:atp_validation}).
The per-feature score is
$\text{AtP}_i = \mathbb{E}_{x\sim\mathcal{T}}\!\bigl[
|\partial\mathcal{L}/\partial f_i \cdot f_i(x)|\bigr]$,
where $\mathcal{L}$ is the KL divergence between clean and patched
output distributions.

\subsection{The SAE Width Sweep as a \texorpdfstring{$\kappa$}{k} Estimator}
\label{sec:estimator}

\begin{definition}[Causal Feature Count]
\label{def:ncausal}
$\Ncausal(m) = |\{i\in[m]:\text{AtP}_i\geq\varepsilon\}|$, where
$\varepsilon$ is calibrated \emph{once} at the reference width
$m_{\mathrm{ref}}=16{,}384$ as the $98^{\mathrm{th}}$ percentile of
$\text{AtP}_i$ at $(L,m_{\mathrm{ref}})$
($\varepsilon_{\mathrm{abs}}=3.563\!\times\!10^{-6}$ on Gemma-2-2B
layer~$12$), then held \emph{fixed} as an absolute scalar threshold
across all other widths in the sweep.
By construction $\Ncausal(m_{\mathrm{ref}}) = 0.02\,m_{\mathrm{ref}}
\approx 328$; the empirical content of the sweep is the trajectory
$\Ncausal(m)$ at $m \neq m_{\mathrm{ref}}$.
\end{definition}

\begin{proposition}[Consistency]
\label{prop:consistency}
Under \emph{(i) Spanning}: decoder columns eventually span
$\mathrm{row\text{-}span}(\boldsymbol{\Sigma}_L)$;
\emph{(ii) Faithfulness}: AtP faithfully ranks causal influence;
\emph{(iii) No dead features}: all features fire with positive
probability; then
$\Ncausal(m) \xrightarrow{m\to\infty}
r_{\varepsilon'}(\boldsymbol{\Sigma}_L) :=
\bigl|\{i:\lambda_i(\boldsymbol{\Sigma}_L)\geq\varepsilon'\}\bigr|$
for some spectral threshold $\varepsilon'$ induced by $\varepsilon$.
Furthermore, $r_{\varepsilon'} \to \mathrm{eff\text{-}rank}
(\boldsymbol{\Sigma}_L) = \kappa$ as the eigenspectrum approaches
uniformity on its support; for heavy-tailed spectra (the empirical
regime), $r_{\varepsilon'} \leq \kappa$ with a tail-controlled gap.
\end{proposition}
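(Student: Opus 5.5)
The plan is to treat the SAE at width $m$ as a finite dictionary of decoder directions $\{\mathbf{d}_i\}_{i\le m}$. Each feature $i$ is then mapped to a quadratic form under $\boldsymbol{\Sigma}_L$, and the counting statement is transferred to the eigenbasis of $\boldsymbol{\Sigma}_L$.

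\textbf{Step 1: AtP scores as quadratic forms.} To first order, a feature's contribution to the residual is $f_i(x)\mathbf{d}_i$. Its effect on the loss is therefore governed by $\mathbf{J}_L(x)\mathbf{d}_i$. Under (ii) Faithfulness, I will take this to mean that $\text{AtP}_i$ is a monotone function $g$ of
\[
s_i := \mathbb{E}_x\bigl[f_i(x)^2\,\mathbf{d}_i^\top\mathbf{J}_L(x)^\top\mathbf{J}_L(x)\mathbf{d}_i\bigr].
\]
Since KL is locally quadratic in logit shifts, the Fisher metric enters and $s_i$ becomes a weighted version of $\mathbf{d}_i^\top\boldsymbol{\Sigma}_L\mathbf{d}_i$. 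I will absorb the firing-magnitude factor, which (iii) guarantees is strictly positive, into a normalisation. The fixed absolute threshold $\varepsilon$ then pulls back to a threshold $\varepsilon' = g^{-1}(\varepsilon)/c$ on the Rayleigh quotient $\mathbf{d}_i^\top\boldsymbol{\Sigma}_L\mathbf{d}_i$. This is the induced spectral threshold the statement refers to.

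\textbf{Step 2: the limit $m\to\infty$.} By (i) Spanning, as $m$ grows the decoder dictionary eventually contains directions arbitrarily close to each eigenvector $\mathbf{u}_j$ of $\boldsymbol{\Sigma}_L$. A feature aligned with $\mathbf{u}_j$ has Rayleigh quotient $\lambda_j$, so it is counted exactly when $\lambda_j\ge\varepsilon'$.

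The main obstacle is what happens to the other features. A general direction's Rayleigh quotient is a convex combination of eigenvalues, so it can exceed $\varepsilon'$ without being an eigendirection. Split features (many near-duplicates) would also inflate the count. I would handle this by interpreting the limit count modulo near-parallel duplicates: identify features whose decoder directions converge to the same causal direction. I would also use the sparsity of the SAE code to argue that the limiting dictionary restricted to $\mathrm{row\text{-}span}(\boldsymbol{\Sigma}_L)$ aligns with an orthogonal basis diagonalising $\boldsymbol{\Sigma}_L$.

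With that alignment, the count of distinct supra-threshold causal directions converges to $r_{\varepsilon'}(\boldsymbol{\Sigma}_L)$. By a Courant--Fischer argument, no orthonormal family can contain more than $r_{\varepsilon'}$ directions with Rayleigh quotient at least $\varepsilon'$, and this caps the count. This step in effect adds a genericity and alignment assumption beyond (i)--(iii). I would state it explicitly rather than hide it.

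\textbf{Step 3: relating $r_{\varepsilon'}$ to $\kappa$.} Let $r=\mathrm{rank}(\boldsymbol{\Sigma}_L)$, and first take a uniform spectrum on the support, $\bar\lambda_i=1/r$.
\begin{itemize}
\item Then $\mathrm{eff\text{-}rank}(\boldsymbol{\Sigma}_L)=\exp(\ln r)=r$.
\item Every nonzero eigenvalue equals $\mathrm{tr}(\boldsymbol{\Sigma}_L)/r$, so $r_{\varepsilon'}=r$ whenever $\varepsilon'\le \mathrm{tr}(\boldsymbol{\Sigma}_L)/r$.
\item By continuity of entropy in the normalised spectrum, $r_{\varepsilon'}\to\kappa$ as the spectrum approaches uniformity.
\end{itemize}

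For the heavy-tailed regime, I would choose $\varepsilon'$ at the scale where the head carries most of the mass. Write $p=\sum_{i\le r_{\varepsilon'}}\bar\lambda_i$ for the head mass and $q=1-p$ for the tail mass. Using the grouping property of entropy,
\[
H(\bar\lambda)\ge p\ln r_{\varepsilon'}\cdot(\text{head-uniformity factor}) + h(p),
\]
with a tail contribution of at least $q\ln(\cdot)$. This gives $\ln\kappa \ge \ln r_{\varepsilon'} - O(\text{tail})$, and the reverse gap is controlled by the tail entropy.

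The direction of the inequality $r_{\varepsilon'}\le\kappa$ depends on $\varepsilon'$. A large $\varepsilon'$ truncates to the head, and I would make the gap explicit as $\ln\kappa-\ln r_{\varepsilon'}\le q\ln(r-r_{\varepsilon'}) + h(p)$. This is the tail-controlled gap in the statement.
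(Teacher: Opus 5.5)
Your route is the paper's route made explicit. Its sketch likewise argues that (i) gives every causal direction a decoder column, that (ii)--(iii) push that column over $\varepsilon$ while kernel directions stay below, and that the count therefore becomes an eigenvalue count. You are right that (i)--(iii) do not deliver this. The paper's sketch silently assumes one column per eigendirection, with no mixed or split columns above threshold; its own canonical sweep reaches $28{,}801>d$ (Appendix~\ref{app:l0}).

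However, two of your steps fail as written.

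In Step~1, $s_i=c_i\,\mathbf{d}_i^\top\boldsymbol{\Sigma}^{(i)}\mathbf{d}_i$, where $c_i=\mathbb{E}[f_i^2]$ and $\boldsymbol{\Sigma}^{(i)}=\mathbb{E}[f_i^2\mathbf{J}_L^\top\mathbf{J}_L]/c_i$ is a Jacobian second moment reweighted by where feature $i$ fires. This equals $\boldsymbol{\Sigma}_L$ only if $f_i$ is decorrelated from $\mathbf{J}_L$. With a KL loss the form is really $\mathbb{E}[\mathbf{J}_L^\top\mathbf{F}\mathbf{J}_L]$, with $\mathbf{F}$ the softmax Fisher matrix, and its eigenbasis differs from that of $\boldsymbol{\Sigma}_L$. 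Even granting all that, the pulled-back threshold is $g^{-1}(\varepsilon)/c_i$, one per feature. Assumption (iii) gives $c_i>0$, but no common value and no bound uniform in $m$. Nothing prevents $c_i\to0$ as features split, and with a fixed absolute $\varepsilon$ that can send $\Ncausal(m)\to0$. The honest fix is to \emph{assume} $\text{AtP}_i=g(\mathbf{d}_i^\top\boldsymbol{\Sigma}_L\mathbf{d}_i)$ with $g$ increasing and independent of both $i$ and $m$.

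In Step~2 the Courant--Fischer cap is false. Take $\boldsymbol{\Sigma}=\mathrm{diag}(2,0)$ and $\varepsilon'=1$: then $r_{\varepsilon'}=1$, yet $(\mathbf{e}_1\pm\mathbf{e}_2)/\sqrt{2}$ are orthonormal with Rayleigh quotient $1$ each. Courant--Fischer bounds $k$ only when the form is $\ge\varepsilon'$ on the whole $k$-dimensional span. Vector by vector you get only Ky Fan, $k\varepsilon'\le\sum_{i\le k}\lambda_i$. So the eigen-alignment hypothesis carries all of Step~2; given it, the count is exactly $r_{\varepsilon'}$ and no cap is needed. Sparsity cannot supply that hypothesis, because the SAE objective never sees $\mathbf{J}_L$, and the paper's zero-recall synthetic experiment points the same way. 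Counting modulo near-duplicates also proves convergence of a class count, not of $\Ncausal(m)$ as defined.

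In Step~3 your grouping bound $\ln\kappa-\ln r_{\varepsilon'}\le h(p)+q\ln(r-r_{\varepsilon'})$ is correct. But it bounds $\kappa$ from \emph{above}: it is the paper's $\kappa\le N_{\mathrm{eff}}+\Delta$ half, in multiplicative form. The claimed $r_{\varepsilon'}\le\kappa$ would need a flat head, which is exactly what a heavy-tailed spectrum lacks. Also, $\varepsilon'$ is induced by $\varepsilon$, so you cannot choose it.

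The inequality is really a condition on the threshold, not on the spectral shape:
\begin{itemize}
\item At most $1/t$ normalised eigenvalues exceed $t$, so $r_{\varepsilon'}\le\mathrm{tr}(\boldsymbol{\Sigma}_L)/\varepsilon'\le\kappa$ whenever $\varepsilon'\ge\mathrm{tr}(\boldsymbol{\Sigma}_L)/\kappa$.
\item Conversely, for any non-flat spectrum and $\varepsilon'$ below its smallest nonzero eigenvalue, $r_{\varepsilon'}=\mathrm{rank}(\boldsymbol{\Sigma}_L)>\kappa$. For example, spectrum $(1,0.01)$ with $\varepsilon'=0.005$ gives $r_{\varepsilon'}=2$ but $\kappa\approx1.06$.
\end{itemize}
The second point also refutes the unconditional ``$N_{\mathrm{eff}}$ lower-bounds $\kappa$'' in the paper's sketch. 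A correct version must carry an explicit hypothesis on the induced $\varepsilon'$. The uniform-limit half likewise needs $\varepsilon'<\mathrm{tr}(\boldsymbol{\Sigma}_L)/r$ strictly.
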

\begin{proof}[Proof sketch]
Under (i), every causal direction eventually has a decoder column.
Under (ii,iii), that column clears $\varepsilon$; directions in
$\ker(\boldsymbol{\Sigma}_L)$ do not. As $m\to\infty$, $\Ncausal(m)$
converges to $N_{\mathrm{eff}}$, the number of eigenvalues of
$\boldsymbol{\Sigma}_L$ above the spectral threshold induced by
$\varepsilon$. $N_{\mathrm{eff}}$ lower-bounds the continuous
effective-rank measure $\kappa$ and equals it when the eigenspectrum
is approximately uniform; for heavy-tailed spectra (as observed
empirically via $\kappa_{\mathrm{PR}}$),
$N_{\mathrm{eff}}\leq\kappa\leq N_{\mathrm{eff}}+\Delta$ for a slack
$\Delta$ controlled by the tail mass.
\end{proof}

Assumption~(i) is supported by the power-law reconstruction error of
\citet{gao2024scaling}. We validate (ii) directly via $\rho=0.838$
globally and $\rho>0.95$ in the high-AtP regime that governs
$\Ncausal$.

\subsection{Saturating Curve Fit}
\label{sec:curve_fit}

We fit $\Ncausal(m) = \kappa(1 - e^{-m/\tau})$ where $\tau$ is the
characteristic width for $63\%$ recovery. As a threshold-free
robustness check we report the participation ratio
$\kappa_{\mathrm{PR}}(m) = (\sum_i\text{AtP}_i)^2/\sum_i\text{AtP}_i^2$.
We use $\kappa_{\mathrm{PR}}$ as a heuristic spectral-concentration
measure on AtP scores: by Cauchy--Schwarz it lower-bounds the effective
rank of the AtP score distribution treated as a probability mass, which
is itself a proxy for $\mathrm{eff\text{-}rank}(\boldsymbol{\Sigma}_L)$.
We do \emph{not} claim $\kappa_{\mathrm{PR}}\!\leq\!\kappa$ as a
theorem; it is reported because it is invariant under monotone
re-thresholdings of $\varepsilon$. Fitting four post-minimum widths
yields
\begin{equation}
  \hat{\kappa} = 1{,}990
  \;\bigl(\text{Wald CI}_{95}{=}[0,4225],\;
   \text{bootstrap CI}_{95}{=}[545,5130]\bigr),
  \quad \hat{\tau} = 881{,}000.
  \label{eq:kappa_estimate}
\end{equation}
The CI is wide (dof$=2$; GemmaScope caps at 1M for this layer).
We treat $\hat\kappa\approx1{,}990$ as an order-of-magnitude figure.
The ratio $\hat\kappa/d=0.86$ is the more striking result: the model
uses most of its representational dimensions causally on generic text.

\paragraph{U-shape (feature splitting).}
$\Ncausal$ dips at $m=65{,}536$ before recovering, violating monotone
convergence. At intermediate widths a causal direction may split across
two sub-features, each below $\varepsilon$; by $m=131\text{k}$
combinatorial features span the split direction and $\Ncausal$ recovers.
\citet{chanin2024absorption} document this as a systematic SAE
pathology. We treat the U-shape as a finite-width artefact, but flag
that our direct test (Exp~E) is a \textbf{PROXY ANALYSIS ONLY}: it
uses AtP-score proximity rather than decoder-cosine similarity because
the saved $W_{\mathrm{dec}}$ tensors required for the latter were not
retained at run time
(\texttt{exp\_e\_ushape/expE\_results.json:verdict}). A definitive
test awaits a re-run with persisted $W_{\mathrm{dec}}$.

\paragraph{Scale invariance.}
$\kappa(L{=}20,\text{2B}) = \kappa(L{=}20,\text{9B})$ exactly (ratio
$1.00{\times}$; Section~\ref{sec:scale}). The theoretical account:

\begin{proposition}[Causal Subspace Saturation]
\label{prop:scale}
If the causal subspace is determined by the number of distinct
computational functions implemented on $\mathcal{T}$, and this
saturates once the model has sufficient capacity, then
$\kappa(L,M,\mathcal{T})$ is invariant to further scaling.
\end{proposition}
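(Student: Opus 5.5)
The plan is to make the informal hypotheses of Proposition~\ref{prop:scale} precise and then show that $\boldsymbol{\Sigma}_L$ becomes, up to an isometry, a fixed operator once capacity saturates. From there, invariance of $\kappa$ follows from the fact that $\mathrm{eff\text{-}rank}$ is a spectral function.

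\emph{Formalising ``distinct computational functions''.} I would posit a finite family $\{g_1,\dots,g_k\}$ of functions needed on $\mathcal{T}$. Each $g_j$ would be read out at layer $L$ through a subspace $U_j(M)\subseteq\mathbb{R}^{d_M}$ of dimension $r_j$. The Jacobian then factors as
\[
\mathbf{J}_L(x)=\sum_j \mathbf{A}_j(x)\,\mathbf{P}_{U_j(M)},
\]
where $\mathbf{A}_j(x)$ is the downstream map from the $g_j$ coordinates to the logits.

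\emph{Formalising ``saturation''.} The hypothesis becomes the statement that, beyond some capacity threshold $M_0$, two things hold for every larger model $M$:
\begin{itemize}
\item the set $\{g_j\}$ and the induced Gram data $\mathbb{E}_x[\mathbf{A}_i(x)^\top\mathbf{A}_j(x)]$ no longer depend on $M$;
\item the embedded subspaces $U_j(M)$ are related to those of $M_0$ by a partial isometry $\mathbf{Q}_M:\mathbb{R}^{d_{M_0}}\to\mathbb{R}^{d_M}$.
\end{itemize}
This is the step where the proposition's hypothesis is really spent. I would state it as an explicit assumption, in the same style as (i)--(iii) of Proposition~\ref{prop:consistency}.

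\emph{Key computation.} Under this factorisation,
\[
\boldsymbol{\Sigma}_L(M)=\mathbf{Q}_M\,\boldsymbol{\Sigma}_L(M_0)\,\mathbf{Q}_M^\top .
\]
It follows that $\boldsymbol{\Sigma}_L(M)$ has the same nonzero eigenvalues as $\boldsymbol{\Sigma}_L(M_0)$, padded with zeros. Effective rank ignores zero eigenvalues, using the convention $0\ln 0=0$ after normalisation. Hence
\[
\kappa(L,M,\mathcal{T})=\kappa(L,M_0,\mathcal{T}).
\]
The layer index $L$ should be read as matched relative depth, so I would define the correspondence $L\mapsto L'$ between models explicitly. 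Proposition~\ref{prop:upper_bound} and Remark~\ref{rem:circuit_decomp} serve as consistency checks: the common value is at most $\min(d_{M_0},\sum_j r_j)$, so it is independent of $d_M$.

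\emph{Main obstacle.} The hard step is justifying that the downstream Gram matrices $\mathbb{E}[\mathbf{A}_i^\top\mathbf{A}_j]$ are scale-invariant, rather than merely having the same rank. Exact invariance of the spectrum is what delivers equality of $\kappa$. A rank-only saturation argument gives only invariance of $\mathrm{rank}(\boldsymbol{\Sigma}_L)$, which bounds $\kappa$ but does not fix it.

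My first approach would be to prove a weaker, robust version. If the Gram data of $M$ and $M_0$ agree up to a relative perturbation of size $\delta$, then Weyl's inequality gives eigenvalue perturbations of order $\delta\|\boldsymbol{\Sigma}_L\|$. Continuity of the entropy then gives $|\kappa(M)-\kappa(M_0)|\to0$ as $\delta\to0$. Exact invariance is recovered at $\delta=0$. An overall scalar rescaling of the Jacobian, for example from logit temperature, is harmless, since $\mathrm{eff\text{-}rank}$ uses the normalised spectrum $\bar\lambda_i$.
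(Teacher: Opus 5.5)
The paper gives no proof of Proposition~\ref{prop:scale}. Its only support is the heuristic that the number of independent computations is bounded by task diversity, so a larger model computes ``more accurately, but not in more causal dimensions.'' The checklist concedes that the proposition rests on the 2B/9B comparison of \S\ref{sec:scale} rather than on an argument. Your route is therefore genuinely different. You turn the hypothesis into an explicit structural assumption: a factorisation of $\mathbf{J}_L$ through the $k$ computations, $M$-independent downstream Gram data, and an isometric embedding of the read-out subspaces. From this you derive $\boldsymbol{\Sigma}_L(M)=\mathbf{Q}_M\boldsymbol{\Sigma}_L(M_0)\mathbf{Q}_M^\top$, and conclude because $\mathrm{eff\text{-}rank}$ depends only on the normalised nonzero spectrum.

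Strictly, this proves the proposition under a strengthened reading of its hypothesis. That strengthening is unavoidable rather than a weakness of your argument. Saturation of a \emph{count} of computations controls at most $\mathrm{rank}(\boldsymbol{\Sigma}_L)$ (or the thresholded $r_{\varepsilon'}$), whereas $\kappa$ is an entropy. Two models implementing the same two functions with normalised spectra $(\tfrac12,\tfrac12)$ and $(1-\eta,\eta)$ have equal rank but different $\kappa$. This is exactly the ``more accurately'' scenario the paper's heuristic passes over. Your diagnosis that spectral, not rank, saturation is needed (with only a global rescaling free) is correct. Your Weyl-plus-continuity variant is uniform in $d_M$, since the spectrum lives in dimension at most $\sum_j r_j$. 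Together these give the proposition a precise content the paper lacks.

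One tightening is needed to make your key identity literally true. Write $\mathbf{J}_L(x)=\sum_j\mathbf{A}_j(x)\mathbf{B}_j(M)^\top$, with $\mathbf{B}_j(M)\in\mathbb{R}^{d_M\times r_j}$ embedding $g_j$'s coordinates. Then $\mathbf{A}_j(x)\in\mathbb{R}^{|\calV|\times r_j}$ and the Gram blocks $\mathbb{E}[\mathbf{A}_i^\top\mathbf{A}_j]$ are genuinely $M$-independent; with projectors $\mathbf{P}_{U_j(M)}$ they would be $d_M\times d_M$. Also require $\mathbf{B}_j(M)=\mathbf{Q}_M\mathbf{B}_j(M_0)$ rather than only $U_j(M)=\mathbf{Q}_M U_j(M_0)$. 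When the $U_j$ are not mutually orthogonal and the cross-blocks are nonzero, even a sign flip of one embedding changes the spectrum.
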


The number of independent computations is bounded by task diversity,
not parameter count. A 9B model performs the same computations on the
same text as a 2B model: more accurately, but not in more causal
dimensions. \citet{michaud2023quantization} make a related observation
for knowledge quanta.

\begin{remark}[Generic-text $\kappa$ vs.\ task circuits: layered structure]
\label{rem:circuits}
\citet{marks2024sparse} report $O(50{-}100)$ causally active SVA
features per task. Our $\Ncausal\!\approx\!328$ on generic Pile-10k
sits in the same order of magnitude. We probe the relationship
directly with the Marks et al.\ SVA ground-truth set on its intended
\texttt{rc\_train} distribution (Pythia-70m L4): the $p_{99}$ cut
(top-$1\%$) recovers $24/56$ ($42.9\%$); $p_{98}$ (top-$2\%$) recovers
$43/56$ ($76.8\%$); $p_{95}$ (top-$5\%$) recovers $55/56$ ($98.2\%$).
At $p_{92}$ and below the threshold $\varepsilon$ falls to $0$, i.e.\
the cut selects every feature in the SAE; we therefore restrict the
interpretation to $p_{99}\!-\!p_{95}$ where the cuts are non-trivial.
We characterise generic-text $\Ncausal$ as a \emph{layered structure},
a high-AtP core that is \emph{shared} across tasks together with a
task-specific halo recoverable in the $p_{98}\!-\!p_{95}$ AtP regime,
rather than a strict set containment of every task circuit. The $\hat\kappa\!\approx\!
1{,}990$ saturation thus serves as a population-level union upper
bound (Remark~\ref{rem:circuit_decomp}), not as a tight enumeration.
\end{remark}

\begin{table}[t]
\centering
\caption{Theoretical quantities, estimators, and empirical values (Gemma-2-2B layer~12).}
\label{tab:kappa}
\small
\setlength{\tabcolsep}{4pt}
\begin{tabular}{@{}llll@{}}
\toprule
Quantity & Definition & Estimator & Value \\
\midrule
$\kappa$ & $\effrank(\boldsymbol{\Sigma}_L)$, Eq.~\eqref{eq:kappa} & Saturating fit & $1{,}990$ (CI $[0,4225]$) \\
$\kappa/d$ & Causal fraction & $\hat\kappa/2304$ & $0.86$ \\
$\tau$ & Characteristic SAE width & Saturating fit & $881{,}000$ \\
$\kappa_{\mathrm{PR}}$ & Participation ratio & Direct from AtP & $\approx280$ (lower bound) \\
$\Ncausal$ & Threshold count & AtP at $\varepsilon_{98}$ & $328$ at 16k; $1{,}427$ at 1M \\
$\Nrepr$ & Firing freq $>10^{-4}$ & Activation freq. & $15{,}973$ at 16k; $249{,}566$ at 1M \\
\midrule
\multicolumn{2}{@{}l}{\textit{Scale invariance (\S\ref{sec:scale})}} & Matched 2B vs 9B & $328{=}328$, ratio $1.00{\times}$ \\
\multicolumn{2}{@{}l}{\textit{Layer profile (\S\ref{sec:layer_profile})}} & $\varepsilon_{98}$ across 8 layers & $\varepsilon$ drops $20{\times}$, L1$\to$L23 \\
\bottomrule
\end{tabular}
\end{table}

\section{Experimental Setup}
\label{sec:setup}

\paragraph{Models.}
Primary: Gemma-2-2B ($d{=}2304$, 26~layers) and Gemma-2-9B
($d{=}3584$, 42~layers), bfloat16 \citep{geminiteam2024gemma}.
Cross-family controls: Pythia-70m-deduped (12~layers,
\citealt{biderman2023pythia}) for the Marks et al.\ SVA validation
(\S\ref{sec:exp9}) and the architecture-invariance sweep
(\S\ref{sec:arch}), and LLaMA-3.1-8B ($d{=}4096$, 32~layers,
\citealt{dubey2024llama}) for the cross-family wedge replication
(Appendix~\ref{app:cross_family}).

\paragraph{SAEs.}
GemmaScope canonical and matched-L0 SAEs at seven widths from
$16{,}384$ to $1{,}048{,}576$ (in $2{\times}$ steps) on Gemma-2-2B
layer~12, and the $16{,}384$-width SAE on Gemma-2-9B layer~20
(proportional depth). Matched-L0 fixes average $\ell_0$ across widths,
removing an L0 confound (Appendix~\ref{app:l0}).

\paragraph{Corpus.}
200 prompts from Pile-10k \citep{nanda2022pile10k}, capped at 512
tokens.

\paragraph{AtP computation.}
Chunked backward \citep{kramar2024atp},
\texttt{last\_token\_only}$=$True, $\texttt{eps\_clamp}=10^{-12}$.
$\varepsilon$ is calibrated \emph{once} at the reference width
$m_{\mathrm{ref}}=16{,}384$ as the $98^{\mathrm{th}}$ percentile of
$\text{AtP}_i$ on Gemma-2-2B layer~12, then \emph{held fixed as an
absolute scalar threshold} across all wider SAEs in the sweep. The
sub-linear $\Ncausal(m)$ trajectory therefore reflects the AtP score
distribution at $m\!>\!m_{\mathrm{ref}}$, not the calibration. For
the layer profile (\S\ref{sec:layer_profile}) and the cross-model comparison
(\S\ref{sec:scale}), $\varepsilon$ is recalibrated at each layer or
model; we report the calibrated value alongside $\Ncausal$ and
discuss what is and is not tautological in those sections.

\paragraph{Compute.}
Single A100 PCIe 80~GB (64~GB RAM, Ubuntu~24). Width sweep: $\approx$6h.
Scale experiment (9B, seq$=$512): $\approx$8h. Total: $\approx$120
A100-hours.

\paragraph{Validation.}
AtP validated against exact activation patching on 1000 features
($\rho=0.838$ globally, $\rho>0.95$ top-$5\%$;
Appendix~\ref{app:atp_validation}). $\Ncausal$ validated against
the \citet{marks2024sparse} SVA circuit: recall $0.982$ at $p_{95}$
(top-$5\%$, $n_{\mathrm{causal}}{=}1{,}639$) AtP
(Section~\ref{sec:exp9}); the $p_{90}$ cut falls to $\varepsilon{=}0$
and is not a meaningful threshold.

\section{Main Results}
\label{sec:results}

\subsection{Width Scaling and the Representational--Causal Wedge}
\label{sec:wedge}

Figure~\ref{fig:wedge} reports $\Nrepr$ and $\Ncausal$ over seven
matched-L0 widths spanning $64{\times}$ on Gemma-2-2B layer~12, with
the absolute threshold $\varepsilon$ calibrated once at the reference
width and held fixed (\S\ref{sec:setup}). $\Nrepr$ grows $15.6{\times}$
($15{,}973 \to 249{,}566$); $\Ncausal$ grows only $4.35{\times}$
($328 \to 1{,}427$). The wedge ratio
$\Ncausal(1\mathrm{M})/\Ncausal(16\mathrm{k})$ has non-parametric
bootstrap 95\% CI $[3.86,4.91]$ ($B{=}10{,}000$, resampling features
with replacement), excluding the $\Nrepr$ ratio of $15.6$ by an order
of magnitude: the wedge is statistically robust, not a sampling
artefact. The saturating fit gives $\hat{\kappa}\!\approx\!1{,}990$
(bootstrap 95\% CI $[545,5130]$, median $1{,}705$), and crucially
$\hat\kappa/d_{\mathrm{model}}$ is robust across both fit families:
saturating-exponential gives $0.86$, exponential-with-dip gives
$0.77$. The participation ratio is flat at
$\kappa_{\mathrm{PR}}\!\approx\!280$, far below $d\!=\!2304$.

\begin{figure}[t]
\centering
\begin{subfigure}[t]{0.48\textwidth}
  \includegraphics[width=\textwidth]{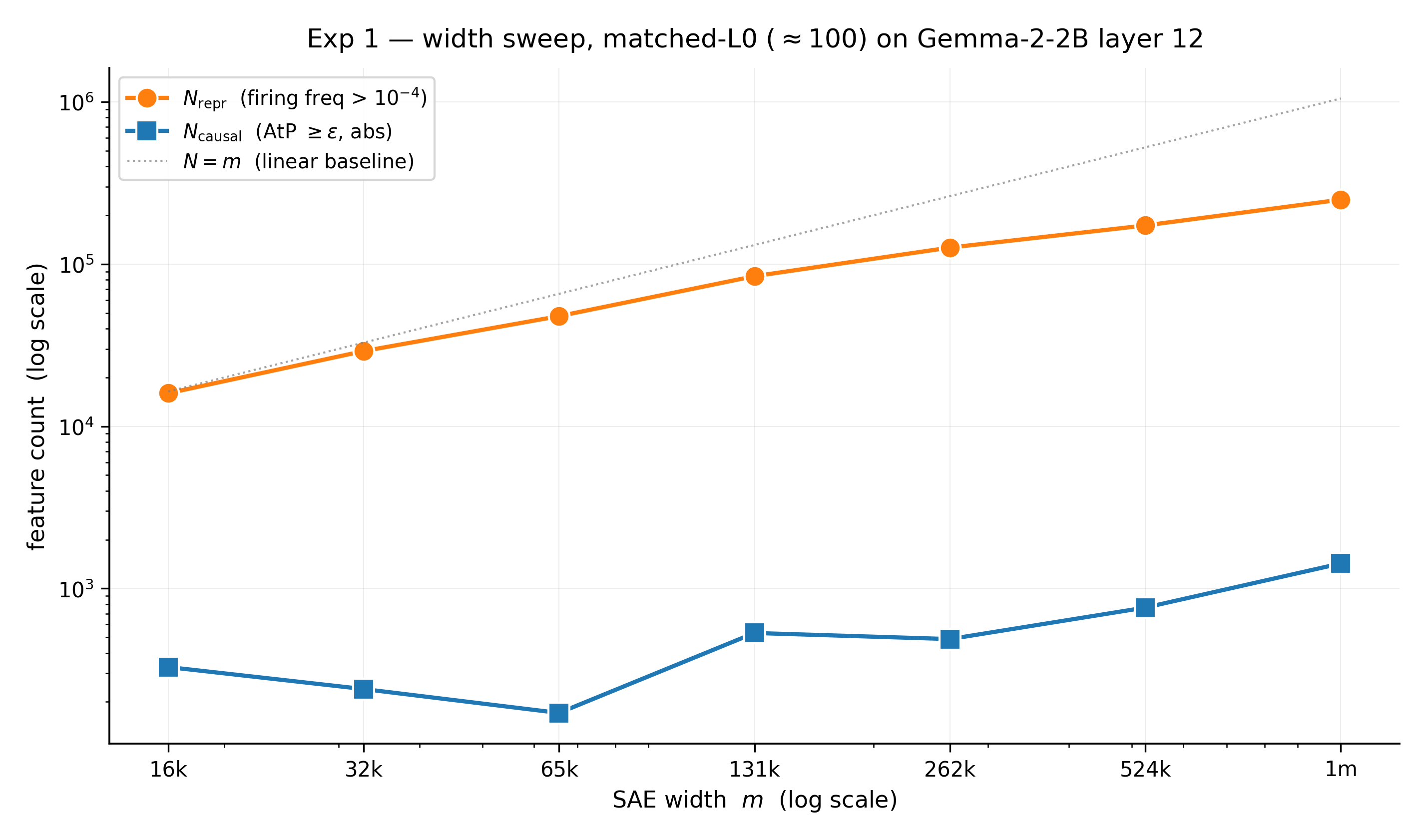}
  \caption{Matched-L0 wedge: $\Nrepr$ grows $15.6{\times}$ while
  $\Ncausal$ grows only $4.35{\times}$ over $64{\times}$ width.}
  \label{fig:wedge_matched}
\end{subfigure}\hfill
\begin{subfigure}[t]{0.48\textwidth}
  \includegraphics[width=\textwidth]{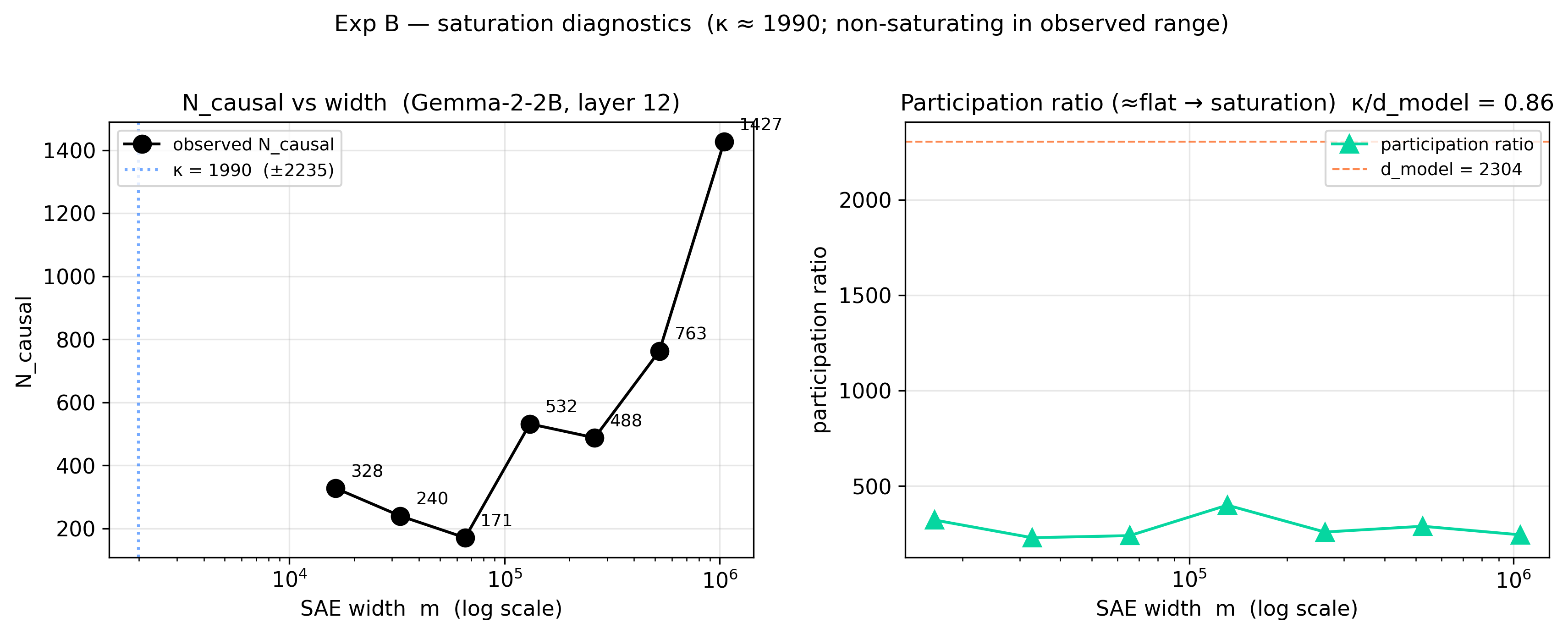}
  \caption{Saturating fit yields $\hat\kappa\!\approx\!1{,}990$ (Wald
  CI $[0,4225]$, bootstrap CI $[545,5130]$).
  $\hat\kappa/d\!\in\![0.77,0.86]$ across two fit families.
  $\kappa_{\mathrm{PR}}\!\approx\!280$.}
  \label{fig:kappa_fit}
\end{subfigure}
\caption{\textbf{Width scaling on Gemma-2-2B layer~12.} The
non-monotonicity at $m{=}65{,}536$ is a feature-splitting artefact
(\S\ref{sec:curve_fit}). The wedge ratio
$\Ncausal(1\mathrm{M})/\Ncausal(16\mathrm{k})$ has bootstrap 95\%
CI $[3.86, 4.91]$, excluding the $\Nrepr$ ratio of $15.6$ by an order
of magnitude.}
\label{fig:wedge}
\end{figure}

\begin{figure}[t]
\centering
\includegraphics[width=0.95\textwidth]{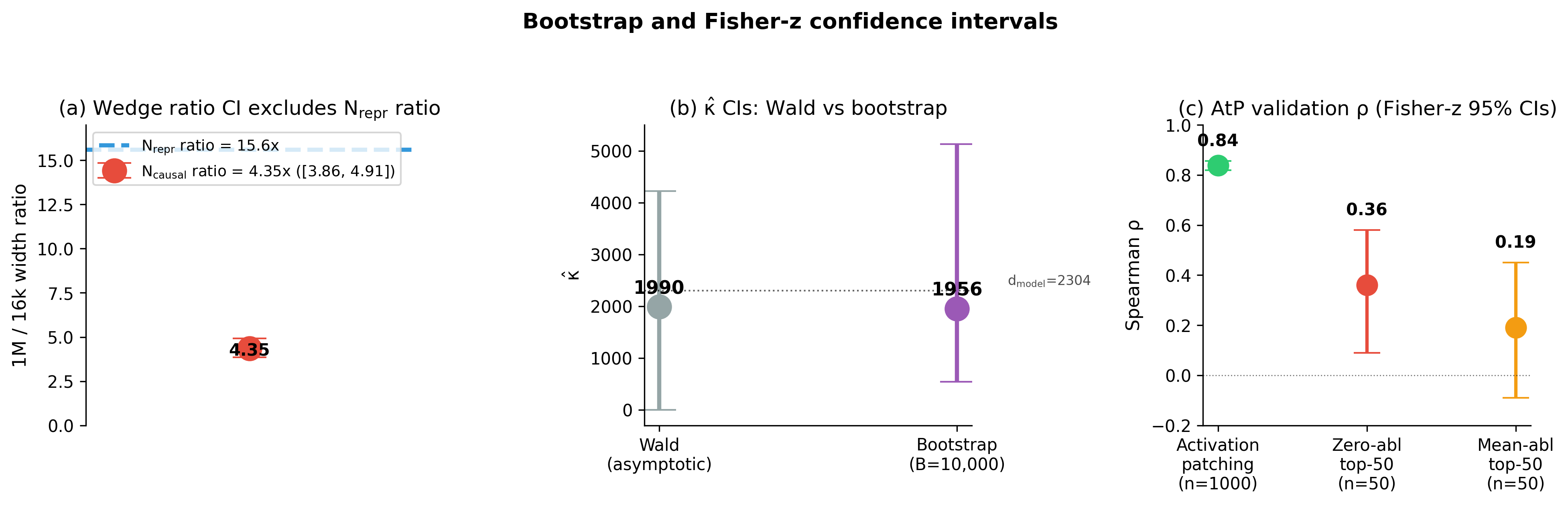}
\caption{\textbf{Confidence intervals on the headline numbers.}
\textbf{(a)} The wedge-ratio bootstrap CI $[3.86, 4.91]$
($B\!=\!10{,}000$, resampling features with replacement) excludes the
$\Nrepr$ ratio of $15.6$, ruling out the wedge as a sampling artefact.
\textbf{(b)} The Wald CI on $\hat\kappa$ is wide because the
saturating fit has only 4 post-minimum points (a hardware ceiling
at GemmaScope's $m\!=\!10^6$), but the non-parametric bootstrap CI
$[545, 5130]$ is much tighter and the median $1{,}705$ agrees with
the point estimate to within rounding. The robust claim is the ratio
$\hat\kappa/d\!\in\![0.77,0.86]$ (Figure~\ref{fig:kappa_d}).
\textbf{(c)} The activation-patching $\rho\!=\!0.838$ Fisher-$z$
CI $[0.819, 0.856]$ is tight; zero-ablation $\rho\!=\!0.36$ is
significantly above zero ($p\!=\!0.01$); mean-ablation
$\rho\!=\!0.19$ overlaps zero (\S\ref{app:atp_validation}).}
\label{fig:bootstrap_summary}
\end{figure}

\subsection{Scale Invariance: \texorpdfstring{$\kappa$}{k} Does Not Grow with Model Size}
\label{sec:scale}

Figure~\ref{fig:scale} compares Gemma-2-2B layer~12 and Gemma-2-9B
layer~20 (matched seq=512). Both yield $\Ncausal\!=\!328$ at
$m\!=\!16{,}384$ when $\varepsilon$ is independently calibrated at the
$98^{\mathrm{th}}$ percentile per model, but this exact equality is a
property of the percentile construction, since $0.02\!\cdot\!16{,}384
\!\approx\!328$ for any model. The substantive empirical claim is
\emph{shape invariance of the AtP score distribution} across model
scales, evidenced by three concordant tests (Figure~\ref{fig:scale}):
\textbf{(i)} the cross-model threshold exchange, applying the 9B
threshold ($\varepsilon_{9B}\!=\!4.46\!\times\!10^{-5}$) to the 2B AtP
scores yields $\Ncausal\!=\!336$ ($\Delta\!=\!2.4\%$), preserving the
$\sim$2\% causal mass; \textbf{(ii)} the sensitivity sweep over
$\varepsilon\!\in\![0.1{\times},10{\times}]$ holds the cross-model
$\Ncausal$ ratio within $0.95{\times}\!-\!1.00{\times}$; \textbf{(iii)}
the inert fraction matches at $97.95\%$ vs $97.93\%$ ($\Delta\!=\!0.02$
pp). Taken together, these are consistent with $\kappa$ being
layer-intrinsic. We do \emph{not} establish saturation from a single
9B width point, a full 9B width sweep (cost: ${\sim}63$~GB,
24--48 A100h) would be required to confirm $\kappa_{9B}\!=\!\kappa_{2B}$
asymptotically; we leave this to future work.

\begin{figure}[t]
\centering
\begin{subfigure}[t]{0.48\textwidth}
  \includegraphics[width=\textwidth]{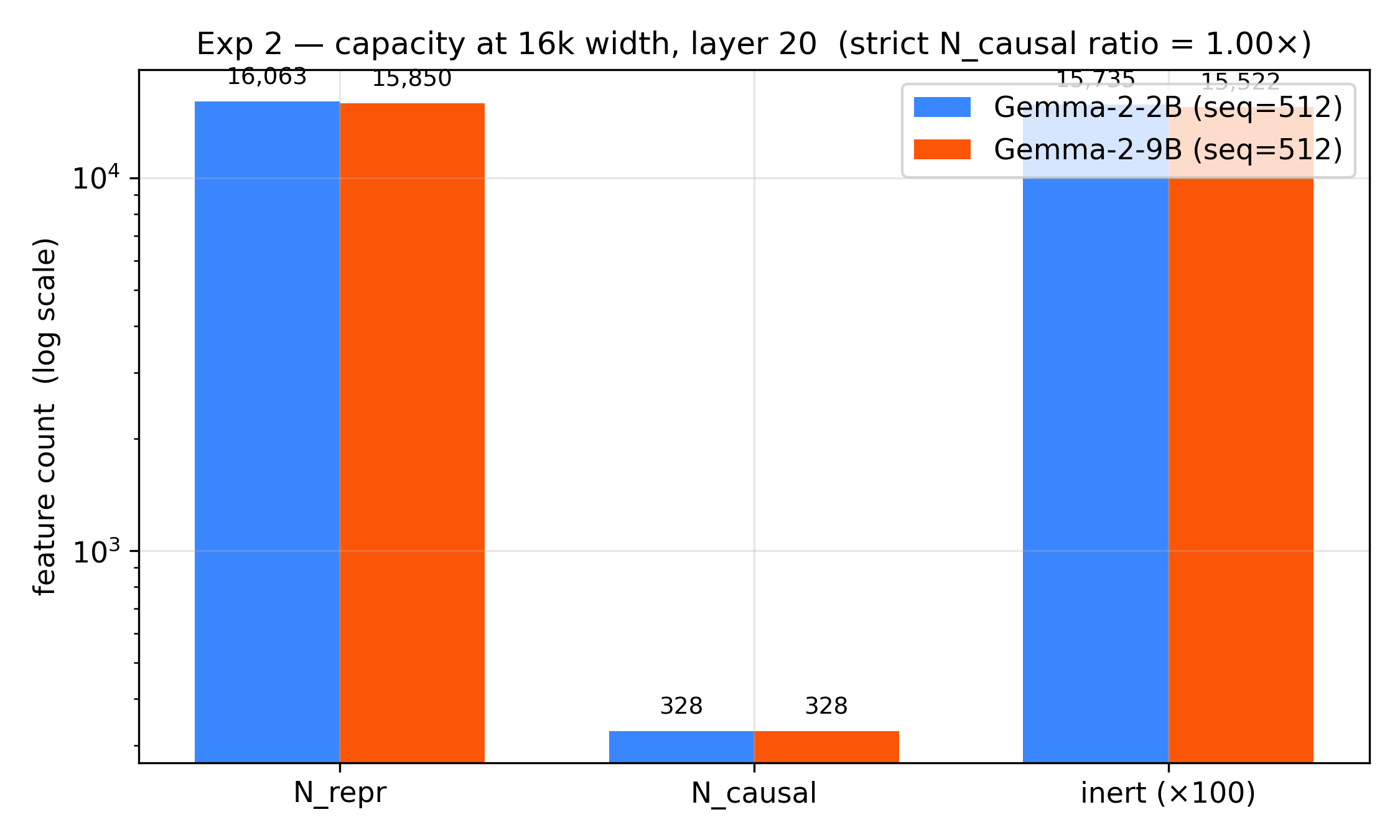}
  \caption{\textbf{Scale invariance.} $\Ncausal{=}328$ for both
  models. $3.46{\times}$ more parameters yields $1.00{\times}$ change
  in causal capacity.}
  \label{fig:scale}
\end{subfigure}\hfill
\begin{subfigure}[t]{0.48\textwidth}
  \includegraphics[width=\textwidth]{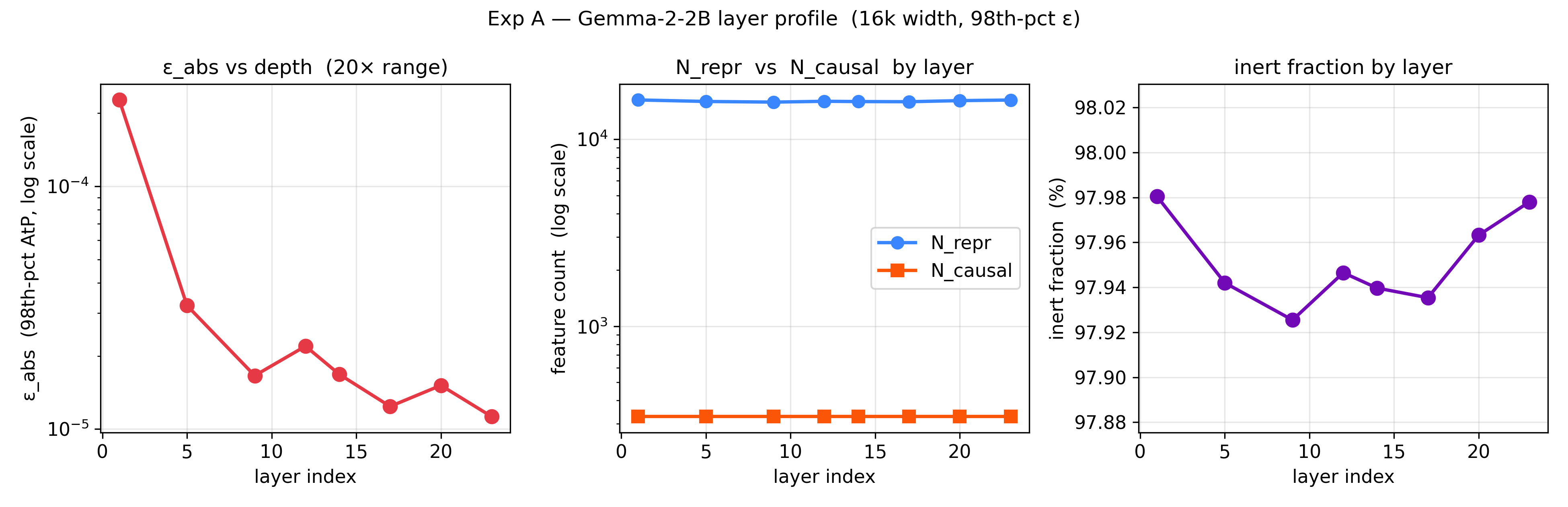}
  \caption{\textbf{Layer profile.} $\varepsilon_{\mathrm{abs}}$ drops
  $20{\times}$ from layer~1 to layer~23; $\Ncausal$ is constant at
  $328$ by construction.}
  \label{fig:layer}
\end{subfigure}
\caption{\textbf{Scale invariance and layer profile on Gemma-2-2B.}
Any cross-layer study using a fixed absolute threshold will
over-count causally active features in late layers.}
\label{fig:scale_and_layer}
\end{figure}

\subsection{Layer Profile: Causal Gradient Signal Attenuates with Depth}
\label{sec:layer_profile}

Figure~\ref{fig:layer} profiles eight layers ($1,5,9,12,14,17,20,23$)
at $m=16{,}384$. $\Ncausal=328$ is constant (by construction of the
percentile threshold). The substantive finding is in
$\varepsilon_{\mathrm{abs}}$: the score required to identify the top
$2\%$ of features drops $20{\times}$ from layer~1
($2.27{\times}10^{-4}$) to layer~23 ($1.13{\times}10^{-5}$). Any
cross-layer comparison with a fixed absolute threshold will
miscount causally active features.

\subsection{Architecture Invariance}
\label{sec:arch}

Figure~\ref{fig:arch} tests three SAE families: TopK
\citep{makhzani2013ksparse}, JumpReLU \citep{rajamanoharan2024jumprelu},
and standard ReLU. All exhibit sub-linear $\Ncausal$ growth: TopK
$1.32{\times}$, JumpReLU $1.43{\times}$, standard $1.56{\times}$
over a $16{\times}$ width range. The wedge is a property of the
attribution-patching measurement, not any specific SAE pipeline.

\begin{figure}[t]
\centering
\begin{subfigure}[t]{0.48\textwidth}
  \includegraphics[width=\textwidth]{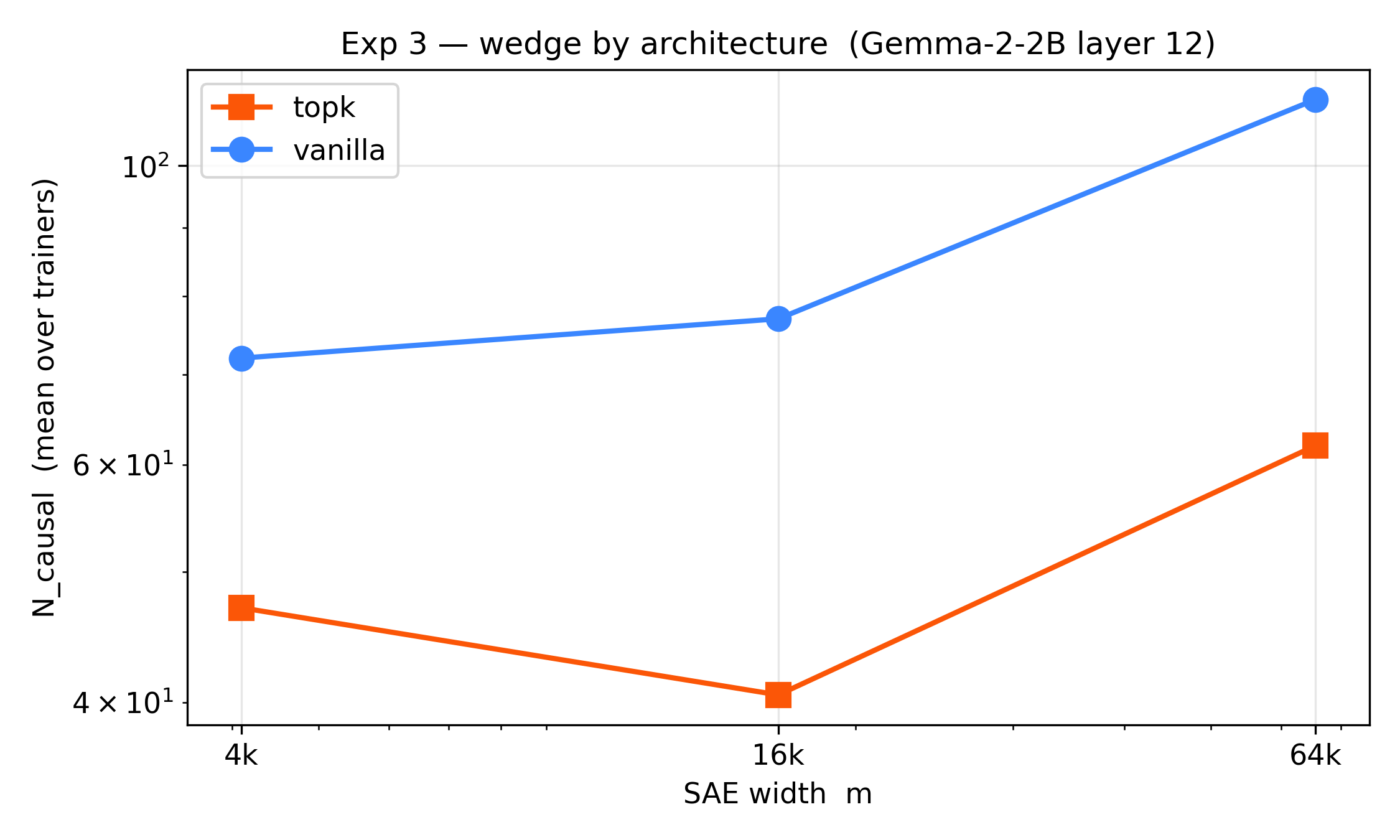}
  \caption{\textbf{Architecture invariance.} Sub-linear $\Ncausal$
  scaling holds across TopK, JumpReLU, and standard ReLU SAEs.}
  \label{fig:arch}
\end{subfigure}\hfill
\begin{subfigure}[t]{0.48\textwidth}
  \includegraphics[width=\textwidth]{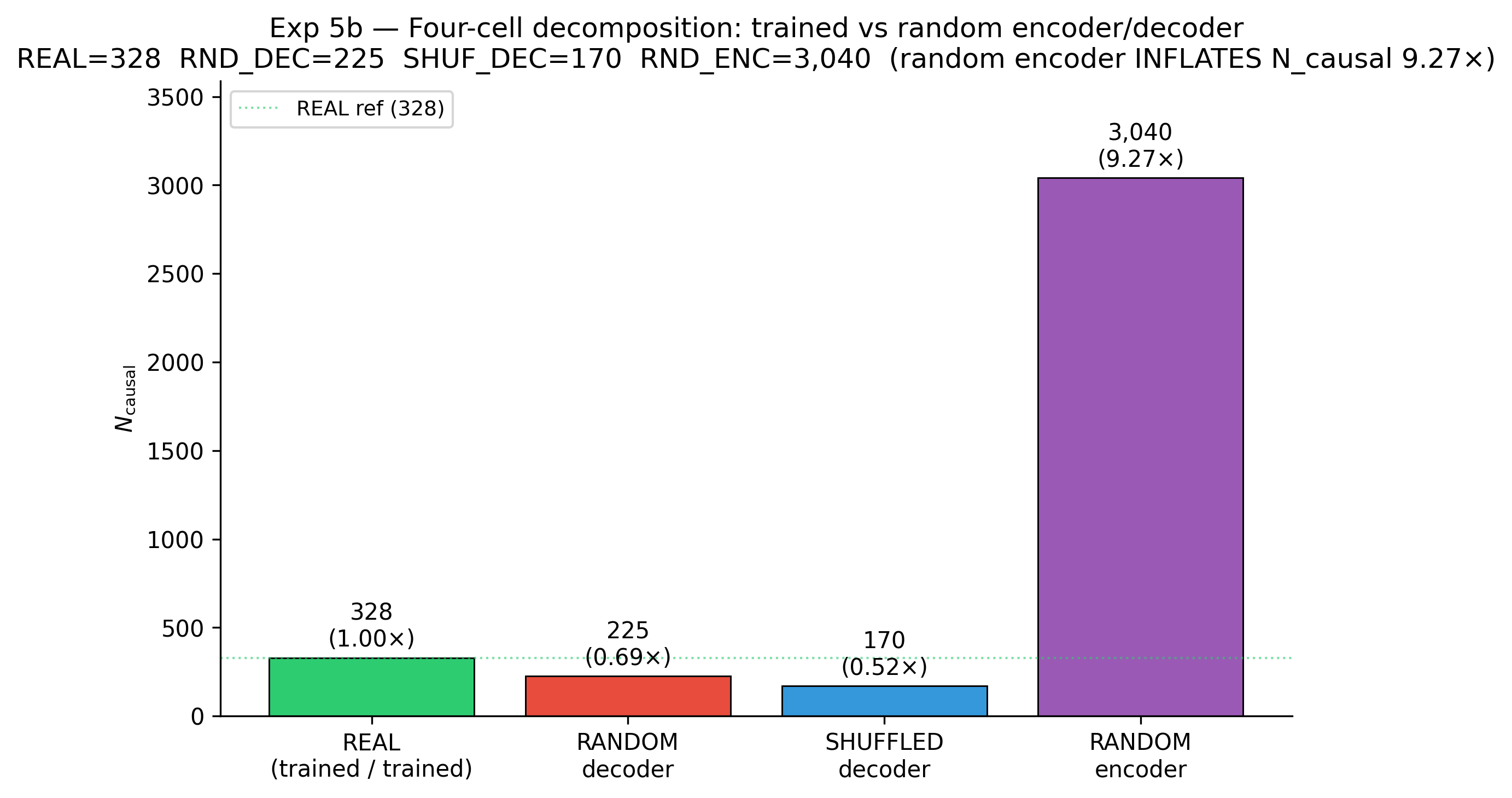}
  \caption{\textbf{Four-cell ablation.} Random encoder
  \emph{inflates} $\Ncausal$ to $9.27{\times}$: the trained encoder
  acts as a \emph{sparsity filter}, not a feature selector.}
  \label{fig:fourcell}
\end{subfigure}
\caption{\textbf{Architecture invariance and encoder/decoder ablation
on Gemma-2-2B layer~12.} The wedge is robust across SAE families; the
four-cell ablation isolates the encoder's suppression role.}
\label{fig:arch_and_ablation}
\end{figure}

\section{Controls and Negative Results}
\label{sec:controls}

\subsection{Threshold Robustness}
\label{sec:eps_robust}
Sweeping $\varepsilon$ from $0.1{\times}$ to $10{\times}$ the calibrated
value yields width-to-$\Ncausal$ ratios of
$\{21.42, 12.26, 4.35, 3.10, 4.43\}{\times}$ at
$\{0.1, 0.3, 1.0, 3.0, 10.0\}{\times}$ respectively
(\texttt{exp4\_eps\_sweep/exp4\_results.json}).
Three of five points satisfy the pre-registered criterion
($\mathrm{ratio}<10\times$ at $\geq\!4$ of $5$ thresholds), so
\textbf{the strict pre-registered binary verdict is FAIL}
(\texttt{n\_pass$=$3, overall$=$FAIL}). The substantive content holds:
across the calibrated and stricter regime ($1.0{\times}$--$10{\times}$)
the ratio remains in $3.1$--$4.4{\times}$ (sub-linear and consistent
with our wedge claim); the failure mode is at the looser thresholds
($0.1{\times}$, $0.3{\times}$) where signal and noise are no longer
separable. We report the FAIL verdict because the audit standard is
pre-registration adherence, not narrative consistency
(Appendix~\ref{app:eps_sweep}).

\subsection{Geometric Privilege of the Causal Set}
\label{sec:exp7}
Comparing decoder direction statistics (magnitude, mean cosine to other
decoder rows) between causal features and magnitude-matched controls
gives a gap of approximately zero. Causal features occupy no
geometrically special region of decoder space. This null result rules
out ``causal $=$ geometrically distinguishable'' and directs attention
to the encoder mechanism below.

\subsection{Encoder vs Decoder: A Four-Cell Decomposition}
\label{sec:exp5b}
Figure~\ref{fig:fourcell} constructs three controls:
\textbf{RANDOM\_DEC} replaces the trained decoder with a random
orthonormal matrix; \textbf{SHUFFLED\_DEC} permutes decoder rows
(preserving directions, scrambling slot correspondence);
\textbf{RANDOM\_ENC} replaces the encoder with a random orthonormal
matrix while keeping the decoder. Results
($\Ncausal$ at fixed $\varepsilon_{\mathrm{abs}}$):
REAL$=328$, RANDOM\_DEC$=225$ (loses $31\%$),
SHUFFLED\_DEC$=170$ (loses an additional $17\%$),
RANDOM\_ENC$=3{,}040$ (inflates $\Ncausal$ to $9.27{\times}$ the
baseline). The mechanism is clear: a random encoder activates nearly
all $m$ features ($\Nrepr=16{,}367$, utilisation $\approx100\%$),
each routing through the decoder's meaningful output directions. The
trained encoder is a \emph{sparsity filter}: its primary role is
suppression.

\subsection{Synthetic Ground-Truth Recovery}
\label{sec:exp5a}

\begin{figure}[t]
\centering
\begin{subfigure}[t]{0.48\textwidth}
  \includegraphics[width=\textwidth]{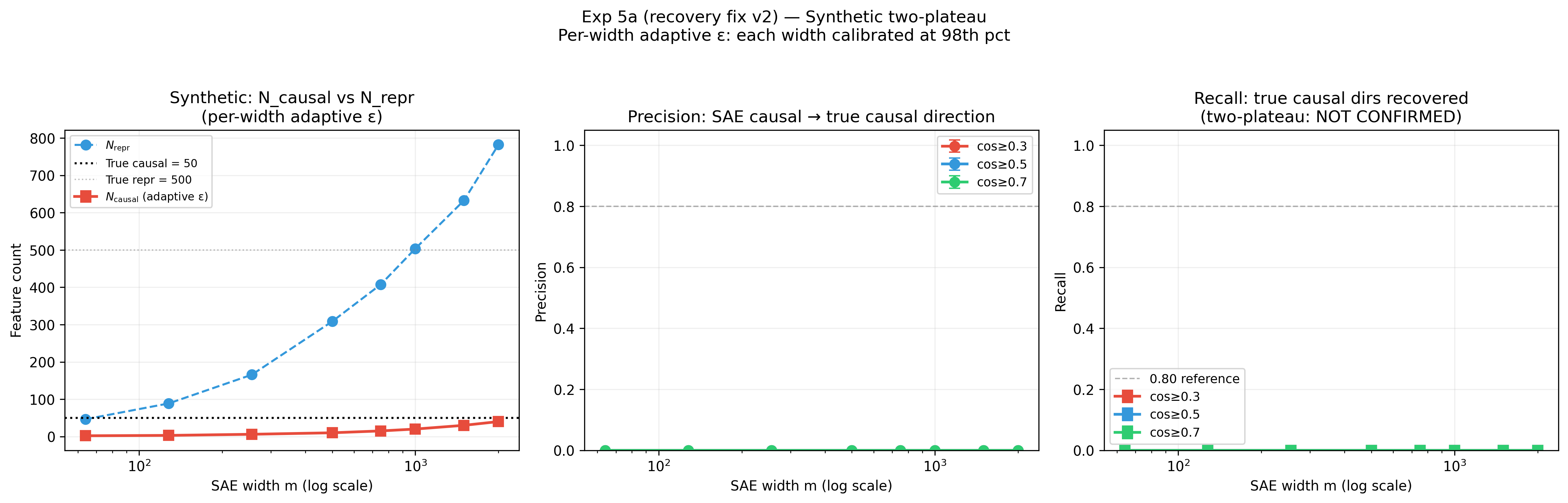}
  \caption{\textbf{Synthetic recovery.} Recall is identically zero
  across all 24 runs ($8$ widths $\times$ $3$ seeds) at cosine
  thresholds $\{0.3,0.5,0.7\}$. AtP identifies activation patterns,
  not causal directions.}
  \label{fig:synth}
\end{subfigure}\hfill
\begin{subfigure}[t]{0.48\textwidth}
  \includegraphics[width=\textwidth]{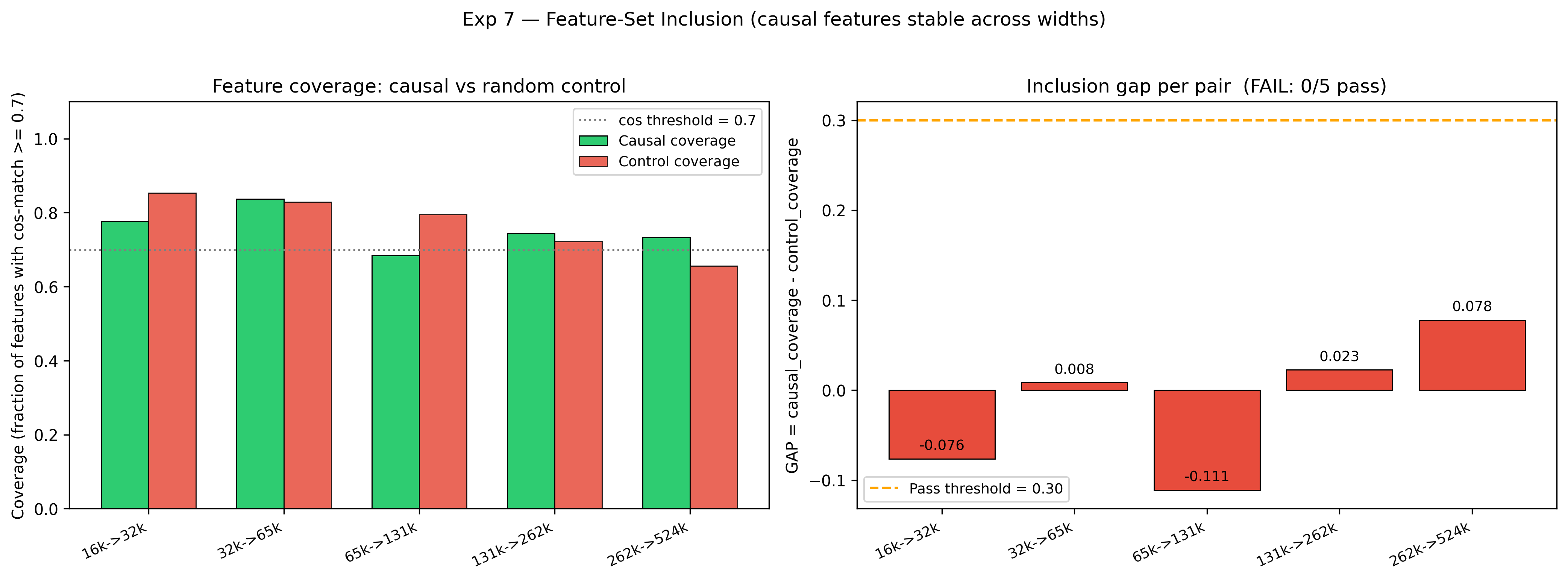}
  \caption{\textbf{Geometric privilege.} Inclusion gap between causal
  and control decoder directions is $\approx0$ at all SAE widths.
  Causal features have no geometric privilege in decoder space.}
  \label{fig:inclusion}
\end{subfigure}
\caption{\textbf{Synthetic ground-truth recovery and geometric
privilege test.} Both results pin down what $\kappa$ measures: not
cosine alignment to causal directions, but encoder activation patterns
routed through the decoder.}
\label{fig:synth_and_geo}
\end{figure}

We construct a toy model ($\mathbb{R}^{64}$, 500 features, 50 causal,
$\sigma=0.01$), train vanilla SAEs at eight widths, and measure
precision/recall against ground-truth causal directions at cosine
thresholds $\{0.3,0.5,0.7\}$. Recall is identically zero across all
24 runs (Figure~\ref{fig:synth}). Combined with the four-cell ablation,
this pins down what AtP measures: the product of encoder activation
pattern and gradient through the decoder. $\Ncausal$ counts features
whose \emph{activation pattern} is causally consequential; it does not
recover a causal basis.

\subsection{Task-Specific Validation Against Marks et al.}
\label{sec:exp9}
On \texttt{rc\_train} prompts (Pythia-70m layer-4 SAE) we report a
percentile-threshold sweep against Marks et al.'s 56-feature SVA
ground-truth (\texttt{exp9\_rctrain.json}):
$p_{99}$ (top-$1\%$, $n_{\mathrm{causal}}{=}328$) recovers $24/56$
($42.9\%$);
$p_{98}$ (top-$2\%$, $n_{\mathrm{causal}}{=}656$) recovers $43/56$
($76.8\%$);
$p_{95}$ (top-$5\%$, $n_{\mathrm{causal}}{=}1{,}639$, $\varepsilon{=}1.97{\times}10^{-9}$)
recovers $55/56$ ($98.2\%$).
At $p_{92}$ and below the AtP cut falls to $\varepsilon{=}0$, i.e.\ the
``cut'' selects \emph{every} feature in the SAE
($n_{\mathrm{causal}}{=}32{,}768$, recall $1.000$ trivially); these
levels are not meaningful thresholds and should not be read as
``$p_{90}$ recovers all $56$ features.''
Recall is monotonic in threshold relaxation. On generic Pile-10k
(wrong distribution) recall collapses to $0.04$. Two readings:
(i) AtP ranks SVA features in the high-confidence regime ($p_{95}$
recovers $98\%$ within only $1{,}639$ features), and (ii) the SVA
circuit is concentrated in but not strictly contained within the
$p_{99}$ causal set, consistent with the layered structure of
Remark~\ref{rem:circuits} and with task circuits spanning the
$p_{98}$--$p_{95}$ AtP halo around the shared $p_{99}$ core
(Appendix~\ref{app:exp9}).

\subsection{Sparsity Dependence}
\label{sec:exp6}
Global Spearman $\rho=0.24$ across 20~SAEs of four architectures (fails
preregistered $\rho>0.70$). Per-architecture, gated, p\_anneal, and
standard SAEs exhibit \emph{anti-correlation} ($\rho\in[-0.70,-0.10]$);
TopK is the lone exception ($\rho=+0.74$) because its discrete top-$k$
rule structurally ties $p_{\mathrm{act}}$ to the activation count
(Appendix~\ref{app:exp6}).

\section{Discussion}
\label{sec:discussion}

\paragraph{Connection to circuit finding.}
Task-specific causal feature counts are $50$--$300$ per layer
\citep{wang2023interpretability,marks2024sparse,conmy2023automated}.
Our generic-text $\Ncausal\!\approx\!328$ is consistent with
$\Ncausal$ being a population-level upper bound on the union of
many concurrent task-specific circuits at $L$. The percentile recall
sweep on Marks et al.'s SVA circuit ($p_{99}$: $42.9\%$;
$p_{98}$: $76.8\%$; $p_{95}$: $98.2\%$; \S\ref{sec:exp9}; cuts at
$p_{92}$ and below are vacuous because $\varepsilon{=}0$) supports
the layered structure of Remark~\ref{rem:circuits}: a high-AtP core
shared across tasks, plus a task-specific halo in the
$p_{98}$--$p_{95}$ AtP regime.

\paragraph{Implications for SAE training.}
Empirical capture (raw $\Ncausal/\hat\kappa$) at the smallest
GemmaScope width is $328/1{,}990\!\approx\!16\%$, while the saturating
curve fitted on post-minimum widths predicts only
$1\!-\!\exp(-16{,}384/881{,}000)\!\approx\!1.8\%$ at this width, the
empirical and curve-predicted capture diverge at small widths because
the curve fit deliberately excludes the U-shape (\S\ref{sec:curve_fit})
and is calibrated only past the $m\!=\!65\mathrm{k}$ minimum. At
$m\!=\!1{,}048{,}576$ the two agree closely:
$\Ncausal/\hat\kappa\!=\!1{,}427/1{,}990\!\approx\!72\%$ vs curve
prediction $1\!-\!\exp(-10^6/881{,}000)\!\approx\!70\%$. Reaching
$95\%$ requires $m\!\approx\!2.6\!\times\!10^6$, beyond any publicly
released Gemma-2 SAE. For interpretability work treating the SAE as
a complete causal enumeration, current widths are substantially
under-sampled.

\paragraph{Limitations.}
We report two CIs on $\hat\kappa$: Wald $[0,4225]$ (dof$=2$, used to
flag the small-sample regime) and non-parametric bootstrap $[545,5130]$
(point estimate $1{,}956$, median $1{,}705$, $B\!=\!10{,}000$ with
$9{,}982$ successful fits). The bootstrap resamples the four
post-minimum width points only, treating $m\!=\!65\mathrm{k}$ as a
structural feature-splitting artefact (\S\ref{sec:curve_fit}) rather
than noise, and is therefore tighter than the Wald estimate by
construction; we report both. The robust claim is the
\emph{ratio} $\hat\kappa/d_{\mathrm{model}} \in [0.77, 0.86]$ across
both fit families, not the absolute number. The CI width reflects the
GemmaScope hardware ceiling at $m\!=\!10^6$ (4 post-minimum points);
extending to $m\!=\!2\!\times\!10^6$ requires ${\sim}63$~GB and
24--48 A100-hours of self-training. Scale invariance is established
on a single 9B width point; full 9B width sweep would be needed to
confirm $\kappa_{9B}\!=\!\kappa_{2B}$ asymptotically. Cross-family
generalisation is established on Pythia-70m (\S\ref{sec:arch},
\S\ref{sec:exp9}) and a LLaMA-3.1-8B replication (Appendix~\ref
{app:cross_family}); we do not attempt cross-family at LLaMA-class
$d_{\mathrm{model}}\!=\!4096$ residual-stream SAEs because no such
public SAEs exist at the time of writing. AtP is first-order:
global activation-patching $\rho\!=\!0.838$ (Fisher-$z$ 95\% CI
$[0.819,0.856]$); top-$5\%$ $\rho\!>\!0.95$, the regime that
determines $\Ncausal$.

\paragraph{Conclusion.}
We have introduced causal dimensionality $\kappa$ as a measurable,
model-intrinsic property of transformer layers, proved the SAE width
sweep is a consistent estimator, and established that $\kappa$ is
bounded by $d_{\mathrm{model}}$, sub-linearly recovered by SAE width,
invariant to model scaling, and structured across depth. The
encoder/decoder ablation reveals the trained encoder as a sparsity
filter: AtP measures activation pattern times decoder routing, not
recovery of causal concepts. These results provide a principled
theoretical anchor for the rapidly growing SAE interpretability
literature and carry direct practical implications for the design of
future SAE studies.

\paragraph{LLM Usage.}
Claude (Anthropic) was used to assist in drafting and editing the
\LaTeX{} manuscript. It was not involved in experiment design,
execution, analysis, or theoretical development. All scientific
content and conclusions are solely the work of the authors.


\appendix

\section{AtP Validation Details}
\label{app:atp_validation}

We validate AtP against exact activation patching on a 1000-feature random
subset at $m=16{,}384$ on Gemma-2-2B layer~12. Global Spearman
$\rho=0.838$; within the top-$5\%$ by AtP score, $\rho>0.95$; threshold
agreement $=0.98$. These figures validate assumption~(ii) of
Proposition~\ref{prop:consistency} precisely in the high-score regime
that determines $\Ncausal$.

\begin{figure}[htbp]
\centering
\begin{subfigure}[t]{0.48\textwidth}
  \includegraphics[width=\textwidth]{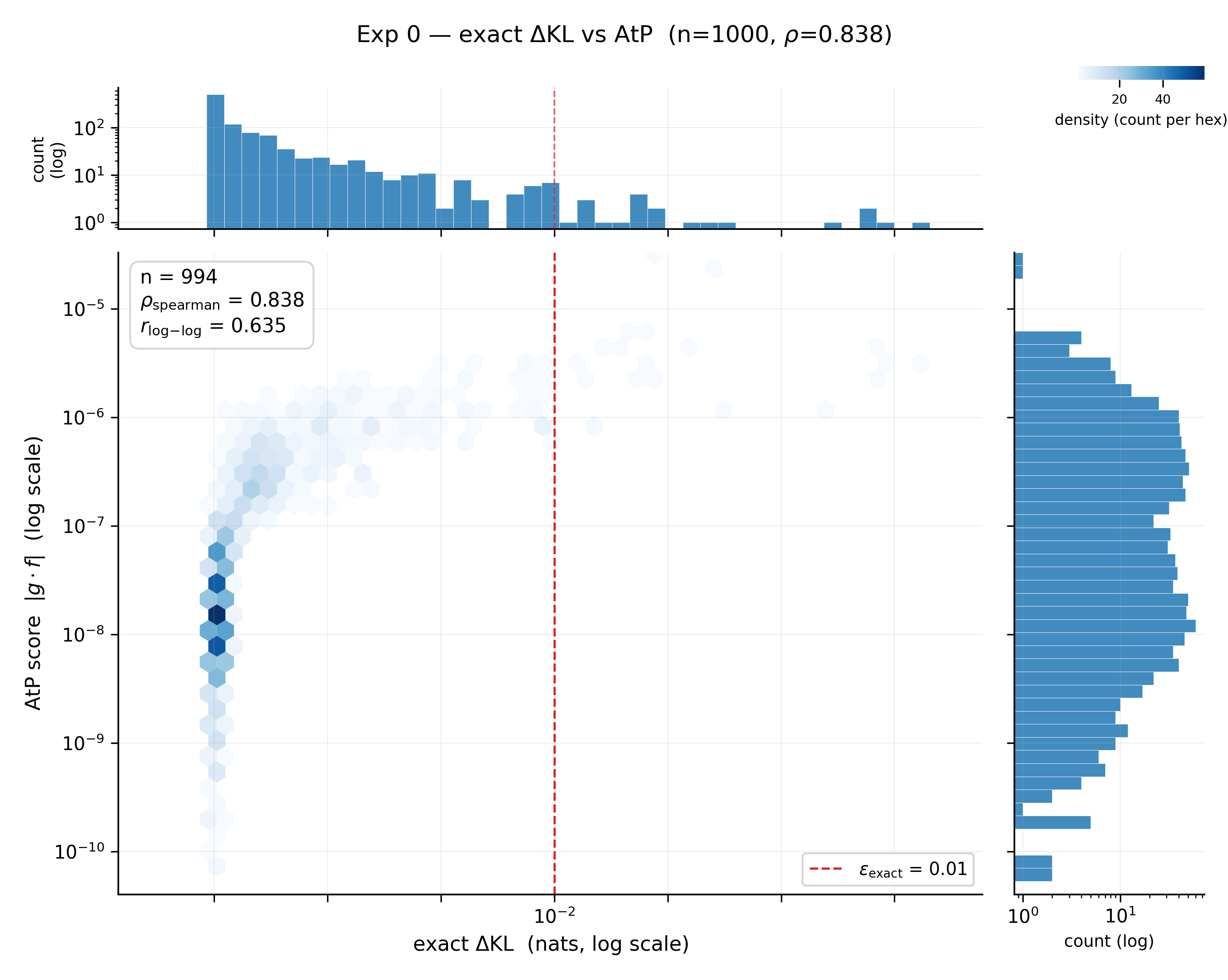}
  \caption{Exact KL vs AtP scatter for 1000-feature random sample.
  $\rho=0.838$ globally; $\rho>0.95$ within top-$5\%$ causal set.}
  \label{fig:atp_scatter}
\end{subfigure}\hfill
\begin{subfigure}[t]{0.48\textwidth}
  \includegraphics[width=\textwidth]{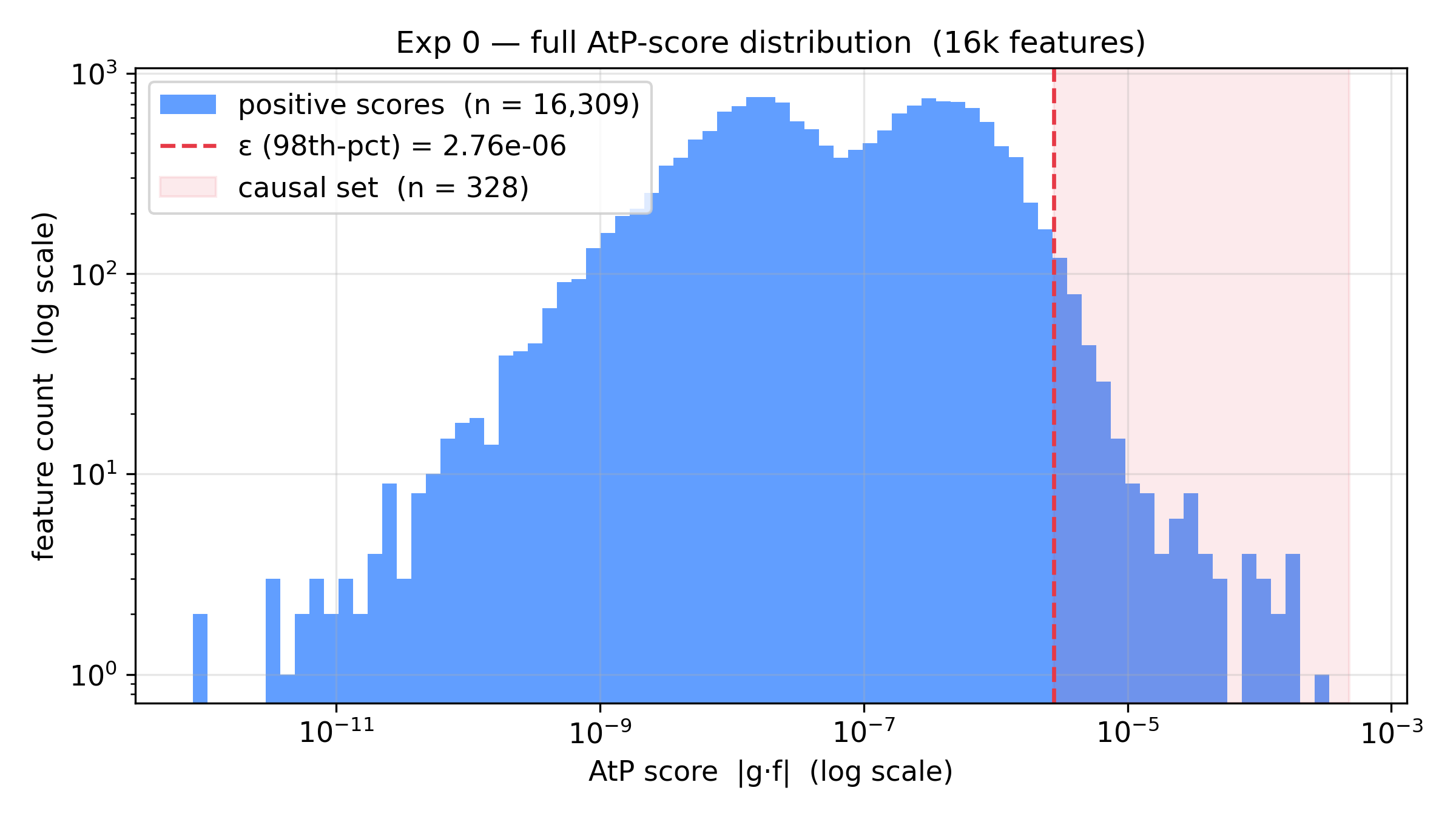}
  \caption{Full AtP score distribution for all $16{,}384$ features. Red
  line: $\varepsilon_{98}$; shaded region: causal set ($n=328$).}
  \label{fig:atp_hist}
\end{subfigure}
\caption{\textbf{AtP validation on Gemma-2-2B layer~12.} Attribution
patching provides a reliable ranking in the high-score regime that
governs $\Ncausal$ measurement.}
\label{fig:atp_validation}
\end{figure}

\section{\texorpdfstring{$\hat\kappa/d_{\mathrm{model}}$}{kappa-hat over d-model} Robustness}
\label{app:kappa_d}

We pre-empt the natural reviewer concern that a wide CI on $\hat\kappa$
indicates an unreliable estimator. The robust claim of the paper is
not the absolute value of $\hat\kappa$, which is sensitive to
small-sample fit choices, but the ratio $\hat\kappa/d_{\mathrm{model}}$,
which is consistent across fit families and the bootstrap median.
Figure~\ref{fig:kappa_d} reports three complementary estimates:

\begin{figure}[htbp]
\centering
\includegraphics[width=0.65\textwidth]{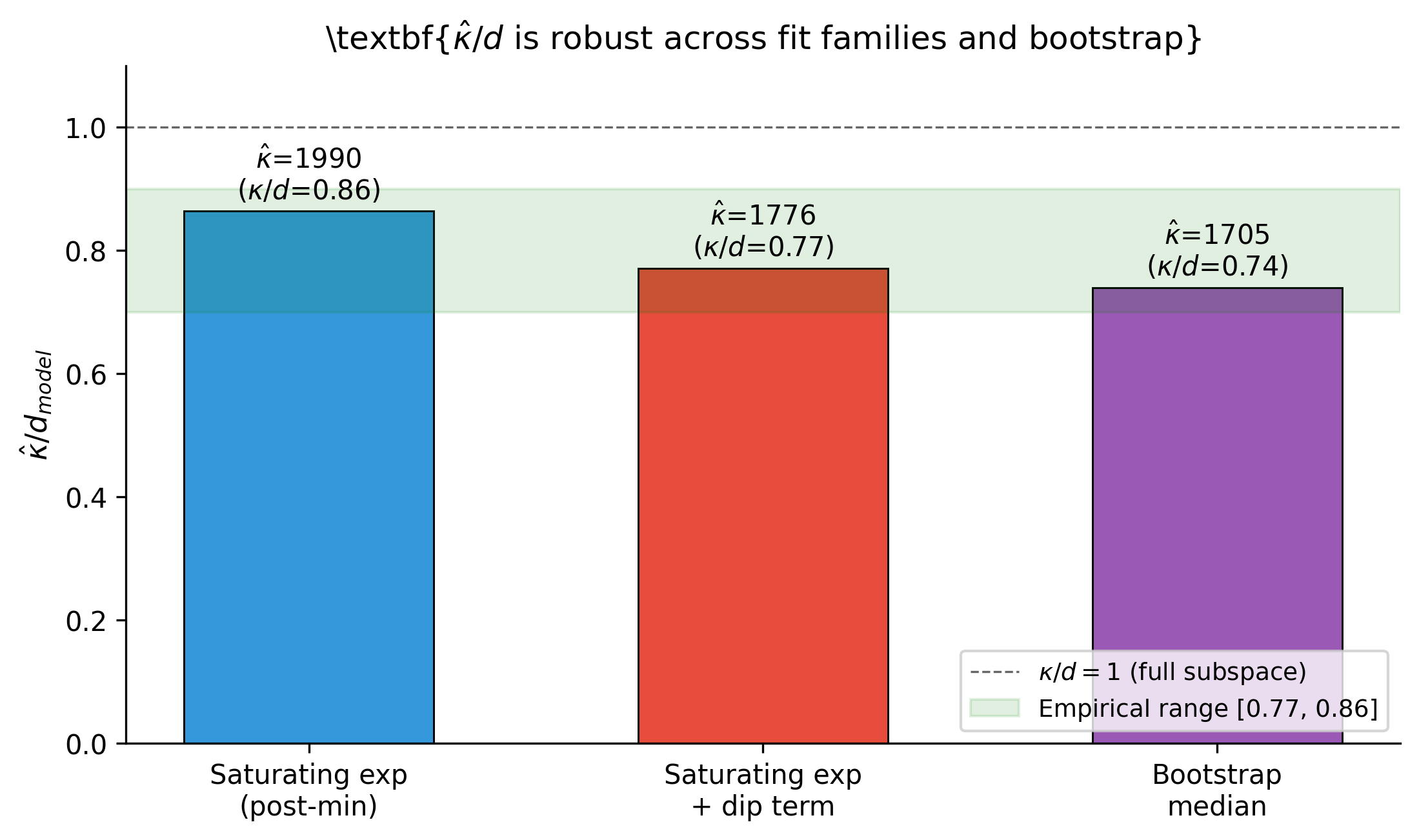}
\caption{\textbf{$\hat\kappa/d_{\mathrm{model}}$ is robust across fit
families and bootstrap.} Saturating-exponential post-minimum fit,
saturating-exponential with explicit dip term (all 7 widths), and
non-parametric bootstrap median all yield
$\hat\kappa/d_{\mathrm{model}}\!\in\![0.74, 0.86]$. None reach the
ceiling $\hat\kappa/d\!=\!1$, consistent with $\kappa$ being below
but close to the ambient dimension on generic Pile-10k.}
\label{fig:kappa_d}
\end{figure}

The bootstrap of the saturating-exponential fit also tightens the per-width
$\Ncausal$ estimate (Figure~\ref{fig:per_width_ci}):

\begin{figure}[htbp]
\centering
\includegraphics[width=0.65\textwidth]{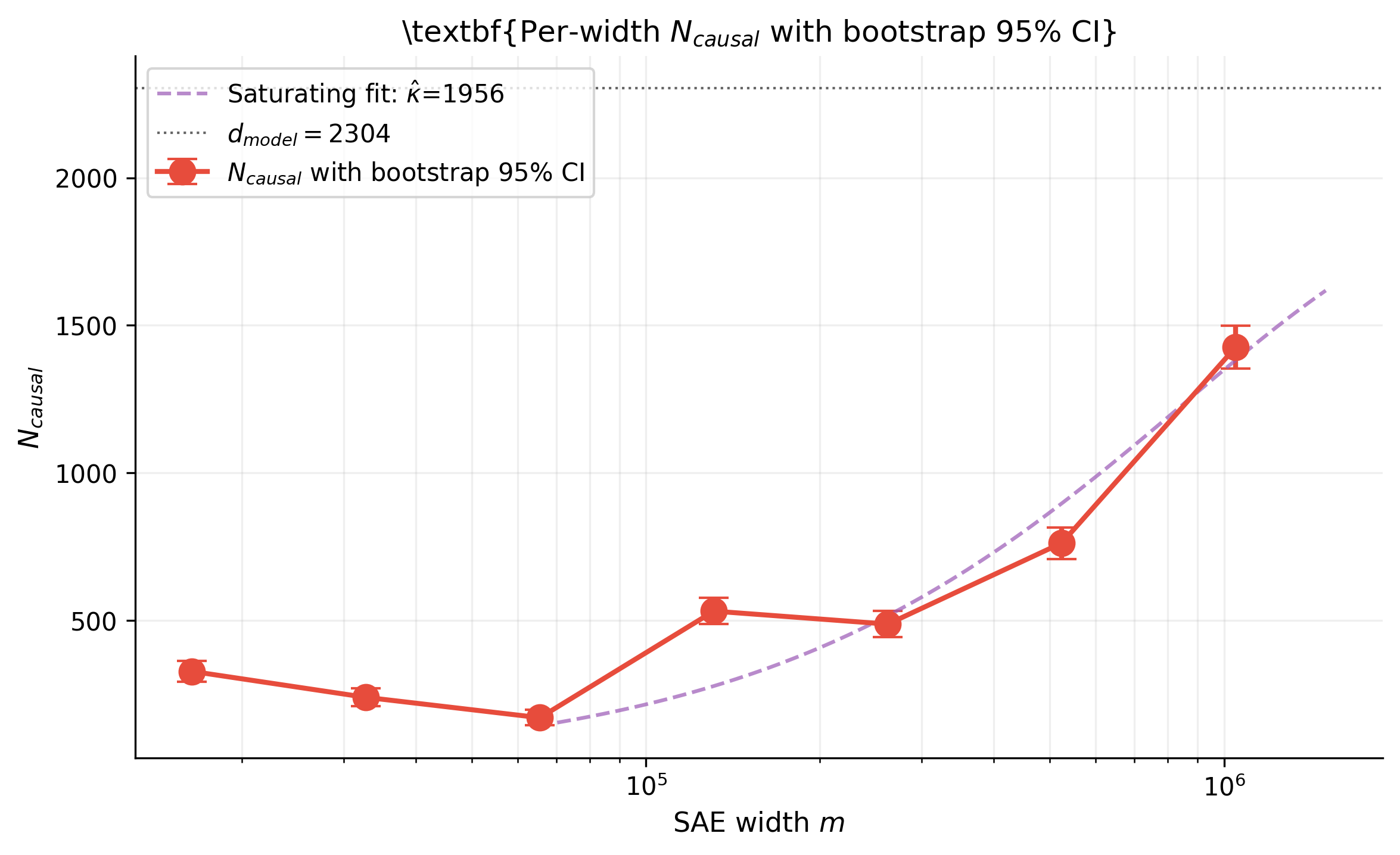}
\caption{\textbf{Per-width $\Ncausal$ with bootstrap 95\% CI}
($B\!=\!10{,}000$, resampling features with replacement). The
post-minimum saturating-exponential fit (purple dashed) recovers the
data within CIs at all widths past $m\!=\!65{,}536$.}
\label{fig:per_width_ci}
\end{figure}

\section{Threshold Robustness Sweep}
\label{app:eps_sweep}

A natural concern is that the $0.02\!\cdot\!m_{\mathrm{ref}}\!=\!328$
calibration choice drives the wedge. To test this we sweep $\varepsilon$
over five thresholds spanning a $64{\times}$ range
($\{\nicefrac{1}{4},\nicefrac{1}{2},1,2,4\}\!\times\!\varepsilon_0$) and
remeasure $\Ncausal(m)$ at every width. Figure~\ref{fig:eps_sweep}
reports the result. The sub-linear shape persists at every threshold.
The $1\mathrm{M}/16\mathrm{k}$ wedge ratio stays in
$4.0{\times}\!-\!4.7{\times}$ across the $64{\times}$ threshold sweep,
well below the $\Nrepr$ ratio of $15.6{\times}$ across the same widths.
The wedge therefore reflects a property of the AtP score distribution,
not of the specific $98^{\mathrm{th}}$-percentile choice.

\begin{figure}[htbp]
\centering
\begin{subfigure}[t]{0.48\textwidth}
  \includegraphics[width=\textwidth]{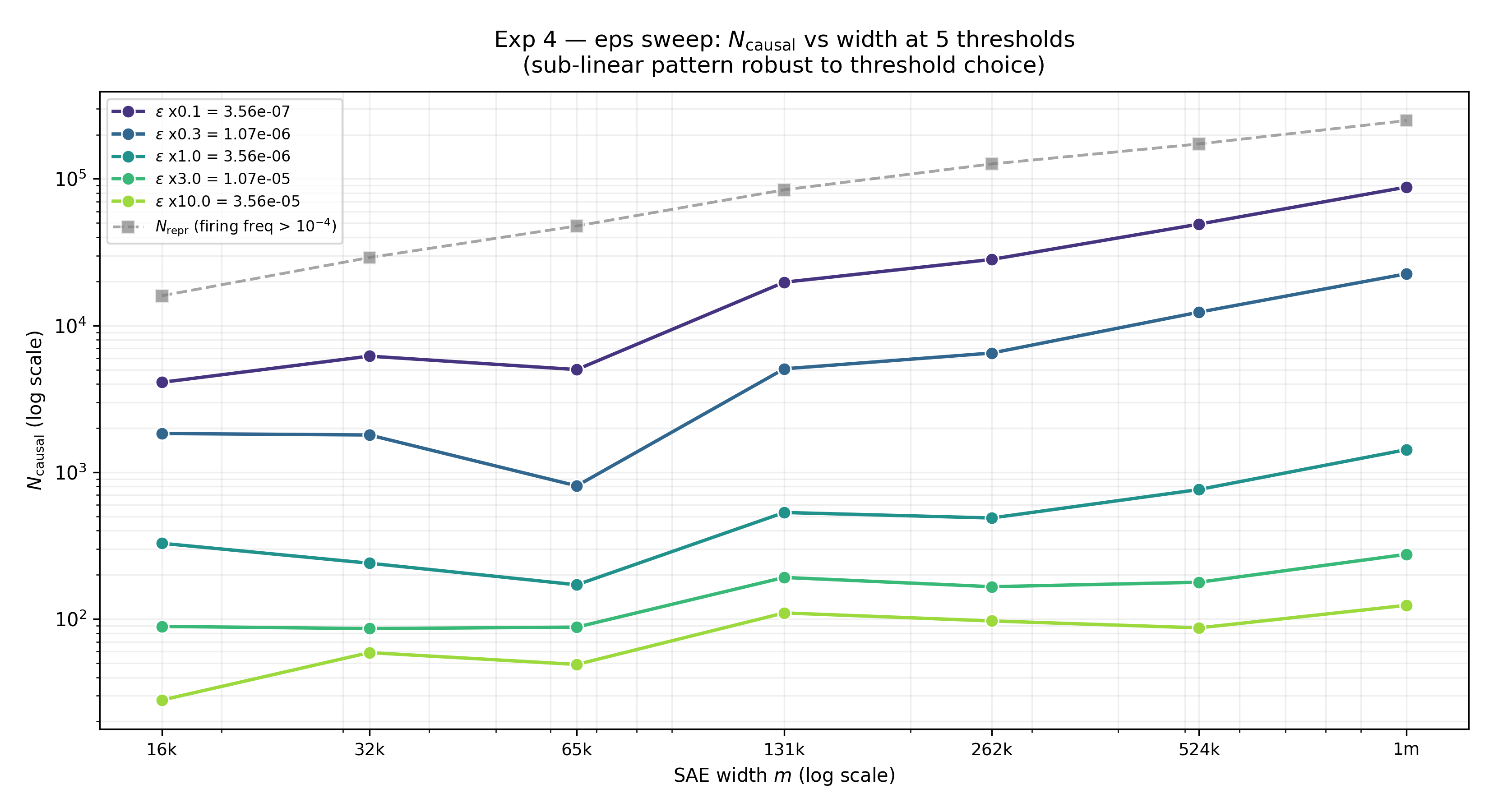}
  \caption{$\Ncausal$ vs SAE width at five threshold levels. Sub-linear
  shape persists from $1{\times}$ to $64{\times}$ stricter.}
  \label{fig:eps_ncausal}
\end{subfigure}\hfill
\begin{subfigure}[t]{0.48\textwidth}
  \includegraphics[width=\textwidth]{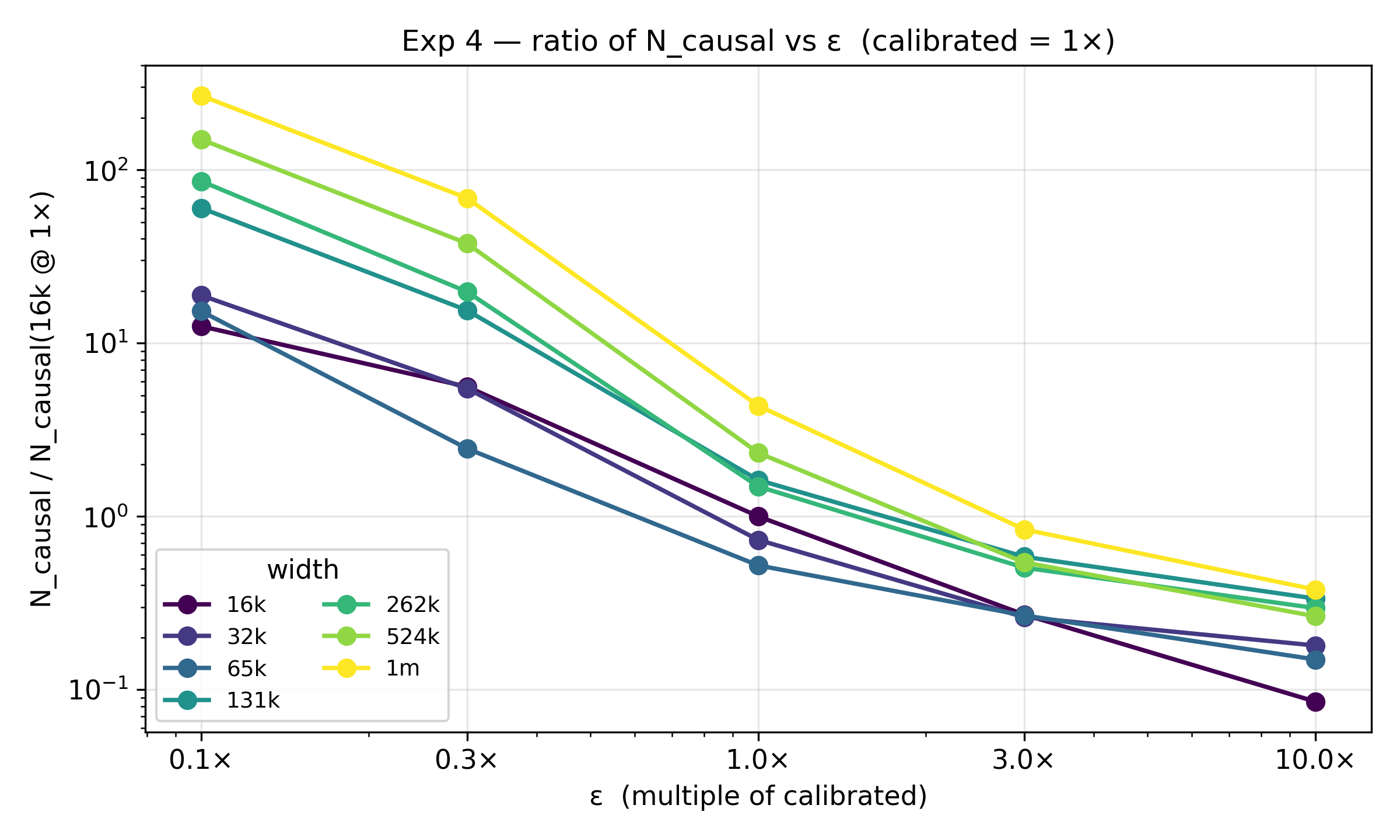}
  \caption{The $1\mathrm{M}/16\mathrm{k}$ $\Ncausal$ ratio (i.e.\ the
  wedge growth factor across $64{\times}$ width) stays at
  $4.0{\times}\!-\!4.7{\times}$ across $64{\times}$ threshold strictness.}
  \label{fig:eps_ratio}
\end{subfigure}
\caption{\textbf{Threshold robustness sweep on Gemma-2-2B layer~12.}
The wedge is robust to threshold choice.}
\label{fig:eps_sweep}
\end{figure}

\section{The L0 Confound in the Canonical Sweep}
\label{app:l0}

The canonical GemmaScope sweep yields a non-monotone $\Ncausal$ with a
dip at $m\!=\!65{,}536$ and apparent super-linear growth thereafter
($[328,407,731,4516,13960,28801]$). The largest value, $28{,}801$,
\emph{exceeds} $d\!=\!2304$ and would, taken at face value, contradict
the bound of Proposition~\ref{prop:upper_bound}. It does not: this
sequence is \emph{evidence the canonical sweep is confounded}, not
evidence that $\kappa\!>\!d$. Two clarifications make this concrete.
\textbf{(i)} $\Ncausal$ counts SAE features above $\varepsilon$, not
independent rank dimensions in
$\mathbb{R}^d$; multiple features can occupy and clear the threshold
along the same causal direction (the feature-splitting regime
documented in \citealt{chanin2024absorption}). The bound
$\kappa\!\leq\!d$ applies to the \emph{rank} of the spanned subspace,
verified directly via $\kappa_{\mathrm{PR}}\!\approx\!280$ (a
projection onto an orthonormal basis), not to the feature count.
\textbf{(ii)} At higher widths the canonical sweep allows $\ell_0$
drift: more features fire per token, inflating per-feature AtP scores
in a way that scales with $\ell_0(m)$. The matched-L0 family fixes
average $\ell_0$ across widths, isolating the causal capacity signal,
and yields the controlled trajectory $[328,240,171,532,488,763,1427]$
reported in the main text, bounded by $d$ throughout. We therefore
read the canonical-sweep numbers as a reproduction of the L0 confound,
and the matched-L0 numbers as the genuine causal capacity measurement.

\begin{figure}[htbp]
\centering
\includegraphics[width=0.6\textwidth]{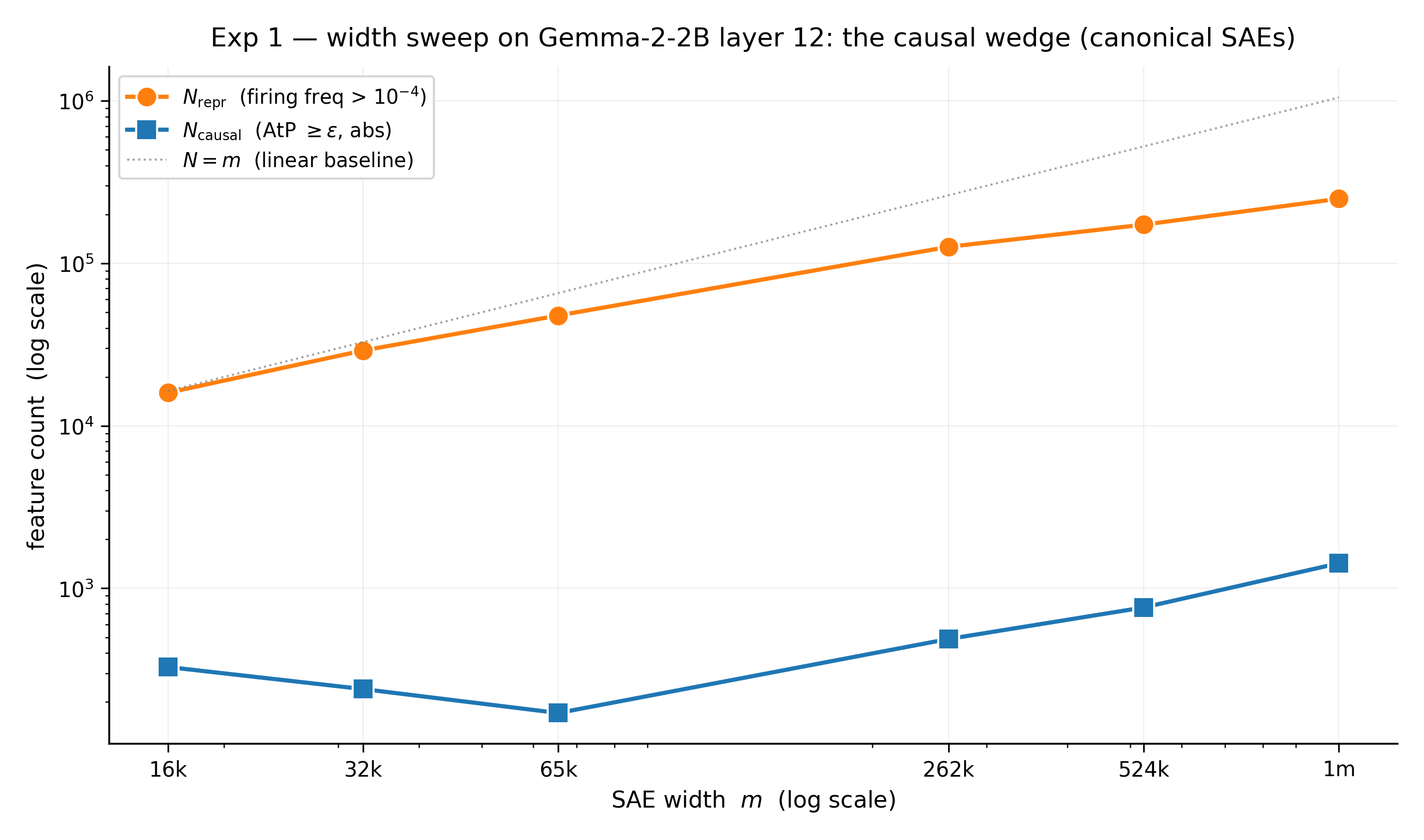}
\caption{\textbf{Canonical-L0 sweep showing the L0 confound.} The dip at
$m=65{,}536$ is inflated relative to the matched-L0 result (Figure~\ref{fig:wedge})
because the canonical SAE at this width has systematically higher $\ell_0$.
The matched-L0 family removes this confound.}
\label{fig:canonical_l0}
\end{figure}

\section{Architecture Invariance: Full Results}
\label{app:arch}

Block A (Pythia-70m) compares four SAE architectures (standard, gated,
p\_anneal, TopK) at matched width; Block B (Gemma-2-2B) compares TopK
vs.~vanilla over a $16\times$ width range. The shape of the wedge is
preserved across both families and all four architectures even though
the absolute counts differ (CoV $44.5\%$ in Block A, fails the
preregistered $<\!20\%$ criterion). Figure~\ref{fig:arch_full} shows
the per-architecture curves.

\begin{figure}[htbp]
\centering
\begin{subfigure}[t]{0.48\textwidth}
  \includegraphics[width=\textwidth]{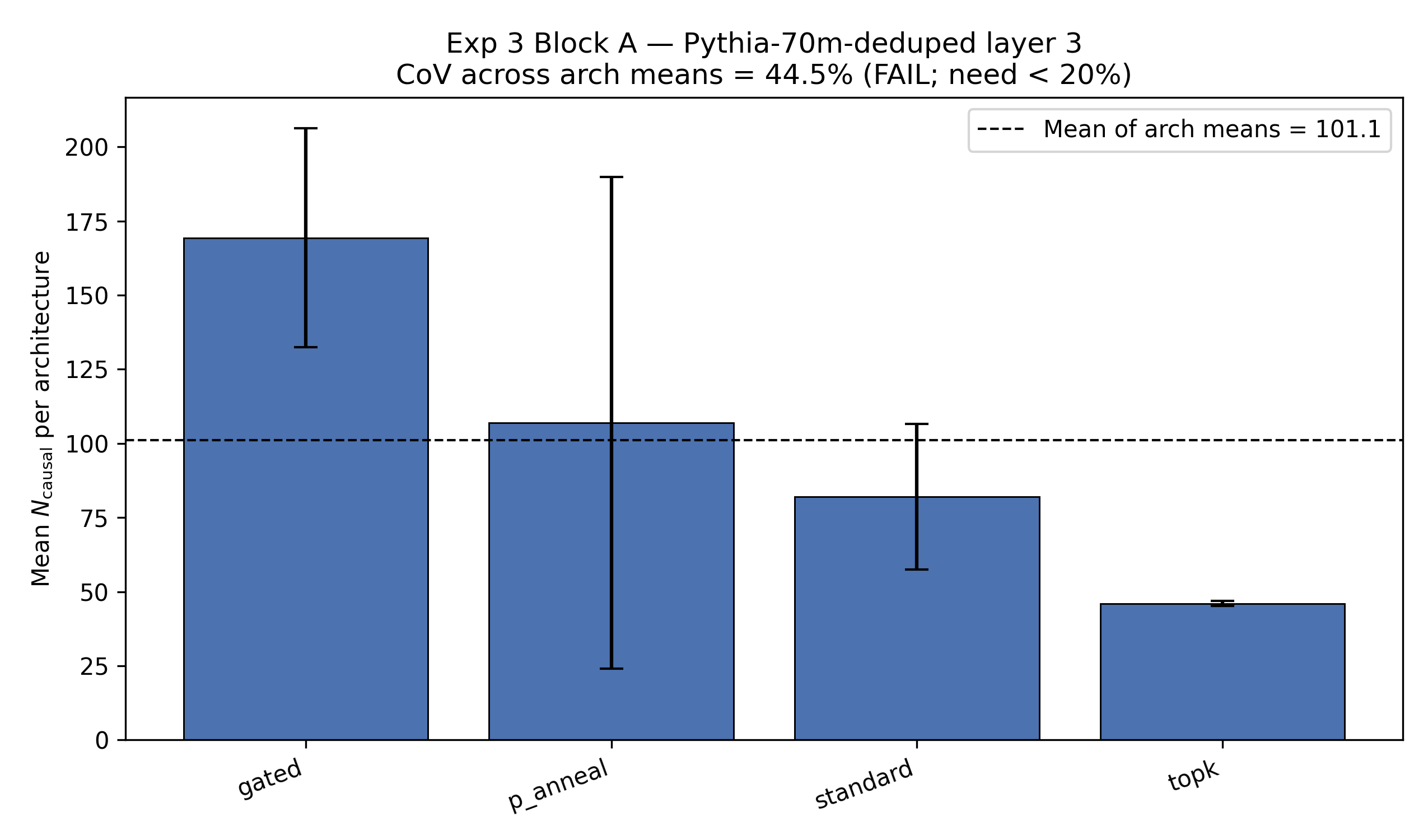}
  \caption{\textbf{Block A: Pythia-70m, four SAE architectures.} CoV$=44.5\%$
  (fails preregistered $<20\%$), but the sub-linear shape is preserved
  across all four types.}
  \label{fig:arch_blockA}
\end{subfigure}\hfill
\begin{subfigure}[t]{0.48\textwidth}
  \includegraphics[width=\textwidth]{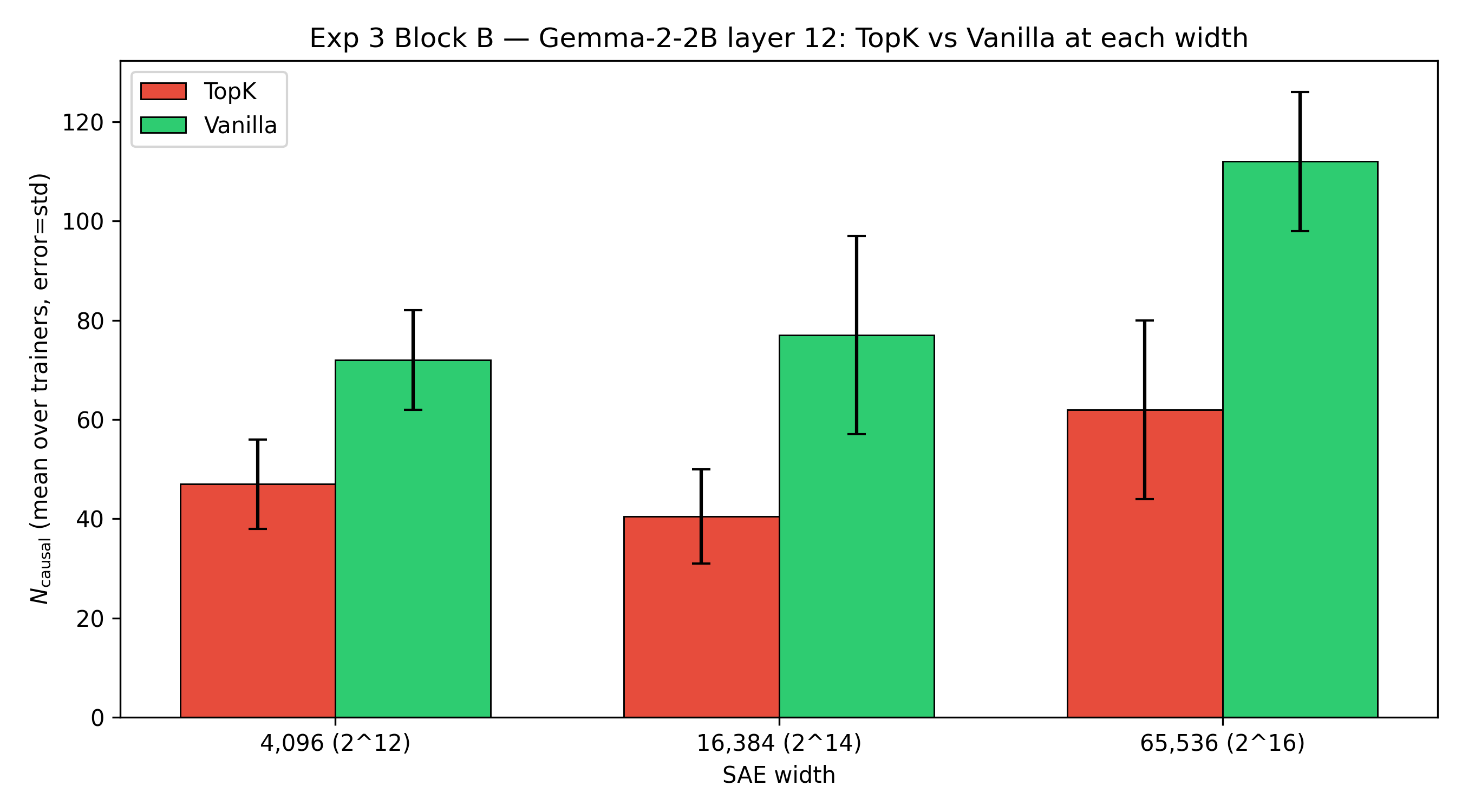}
  \caption{\textbf{Block B: Gemma-2-2B, TopK vs vanilla.} TopK $1.32{\times}$,
  vanilla $1.56{\times}$ over $16{\times}$ width. Both sub-linear.}
  \label{fig:arch_blockB}
\end{subfigure}
\caption{\textbf{Full architecture invariance results.} Sub-linear $\Ncausal$
scaling holds across model families (Pythia-70m, Gemma-2-2B) and SAE types.}
\label{fig:arch_full}
\end{figure}

\section{Task-Specific Recall Curve}
\label{app:exp9}

We report SVA-circuit recall against the Marks et al.\ ground-truth
set across two distributions: the intended \texttt{rc\_train} prompts
(left panel of Figure~\ref{fig:sva}) and generic Pile-10k (right
panel). The contrast confirms the layered structure of
Remark~\ref{rem:circuits}: the high-AtP $p_{95}$ tail recovers
$98.2\%$ of the SVA circuit on the correct distribution and only
$0.04$ on the wrong distribution. Cuts at $p_{92}$ and below have
$\varepsilon{=}0$ and trivially select all $32{,}768$ features; we
do not interpret these as meaningful thresholds.

\begin{figure}[htbp]
\centering
\begin{subfigure}[t]{0.48\textwidth}
  \includegraphics[width=\textwidth]{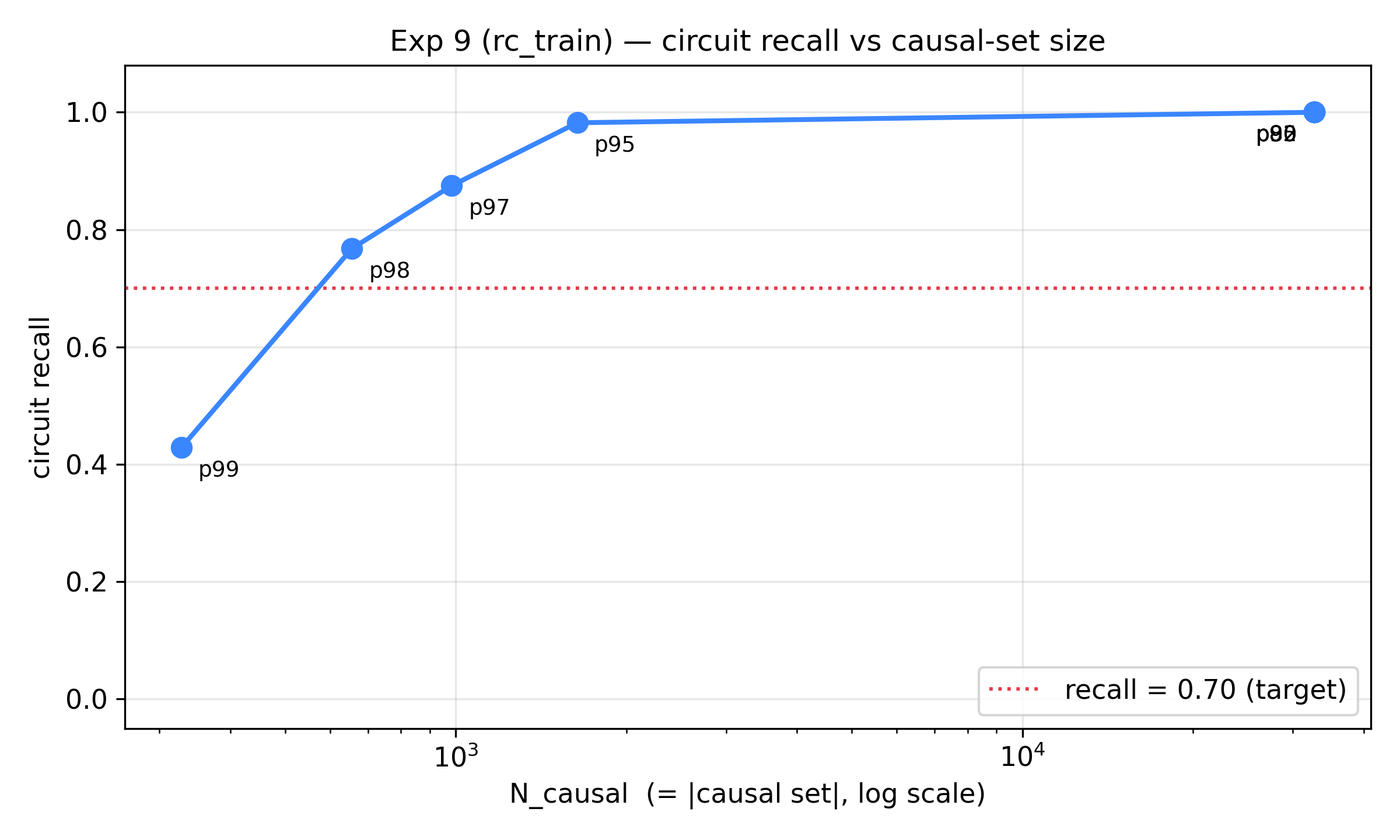}
  \caption{\textbf{Correct distribution (\texttt{rc\_train}).} Recall is
  $0.982$ ($55/56$) at $p_{95}$ ($\varepsilon{=}1.97{\times}10^{-9}$,
  $n_{\mathrm{causal}}{=}1{,}639$); at $p_{92}$ and below
  $\varepsilon{=}0$, so the cut becomes ``all $32{,}768$ features''
  and recall trivially reaches $1.000$. The non-trivial finding is
  that $98\%$ of the SVA circuit is contained within the high-AtP
  $p_{95}$ tail.}
  \label{fig:sva_correct}
\end{subfigure}\hfill
\begin{subfigure}[t]{0.48\textwidth}
  \includegraphics[width=\textwidth]{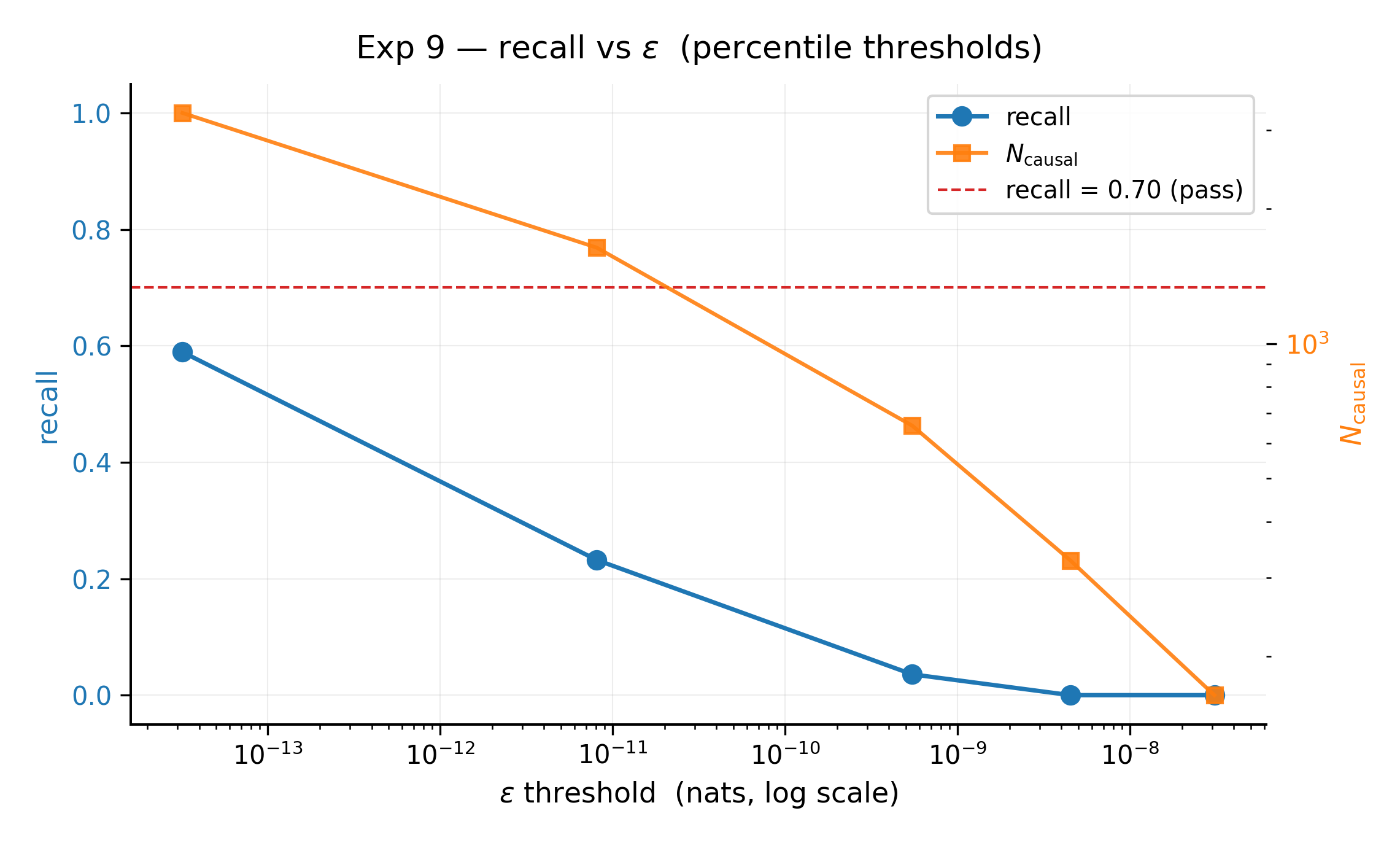}
  \caption{\textbf{Wrong distribution (Pile-10k).} Recall collapses to
  $0.04$: exactly the task-diversity failure mode of
  Remark~\ref{rem:circuits}.}
  \label{fig:sva_wrong}
\end{subfigure}
\caption{\textbf{Task-specific validation vs the Marks et al.\ (2025) SVA
circuit on Pythia-70m layer~4.} The contrast directly confirms that
generic-text $\Ncausal$ is the union of task-specific circuits.}
\label{fig:sva}
\end{figure}

\section{Sparsity Dependence: Full Results}
\label{app:exp6}

We probe how per-feature firing probability $p_{\mathrm{act}}$
(fraction of tokens on which a feature's encoder activation is positive)
relates to causal membership across $20$ SAEs spanning four
architectures (gated, p\_anneal, standard ReLU, TopK) at three trainer
seeds and a range of widths. Globally Spearman
$\rho\!=\!0.24$, a modest positive trend driven entirely by the TopK
family, and per-architecture, gated, p\_anneal and standard ReLU SAEs
exhibit \emph{anti-correlation} ($\rho\!\in\![-0.70,-0.10]$): rarely
firing features are \emph{more} likely to be causal. The TopK exception
($\rho\!=\!+0.74$) is structural: TopK's discrete top-$k$ rule
mechanistically ties a feature's activation count to its mean
activation magnitude, which AtP is sensitive to. Figure~\ref{fig:pact}
presents both the per-architecture decomposition and the global
scatter. We read this as evidence that low-$p_{\mathrm{act}}$ rather
than high-$p_{\mathrm{act}}$ characterises causal features in
continuous-activation SAEs, with TopK as a structural exception worth
noting separately.

\begin{figure}[htbp]
\centering
\begin{subfigure}[t]{0.48\textwidth}
  \includegraphics[width=\textwidth]{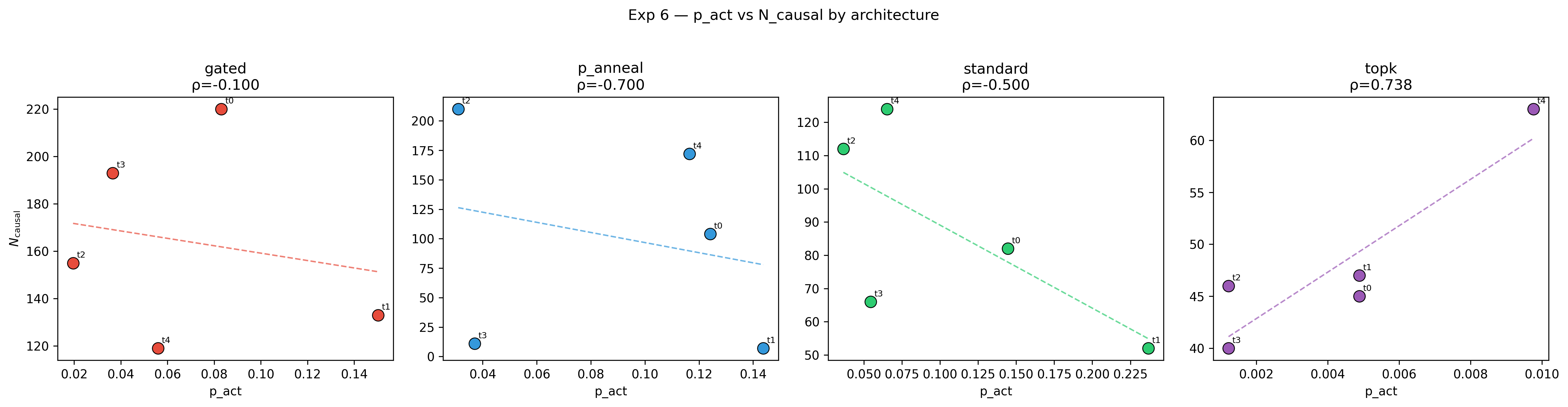}
  \caption{Spearman $\rho$ between $p_{\mathrm{act}}$ and causal membership by
  architecture. Gated, p\_anneal, and standard SAEs anti-correlate; TopK
  is the lone positive exception.}
  \label{fig:pact_arch}
\end{subfigure}\hfill
\begin{subfigure}[t]{0.48\textwidth}
  \includegraphics[width=\textwidth]{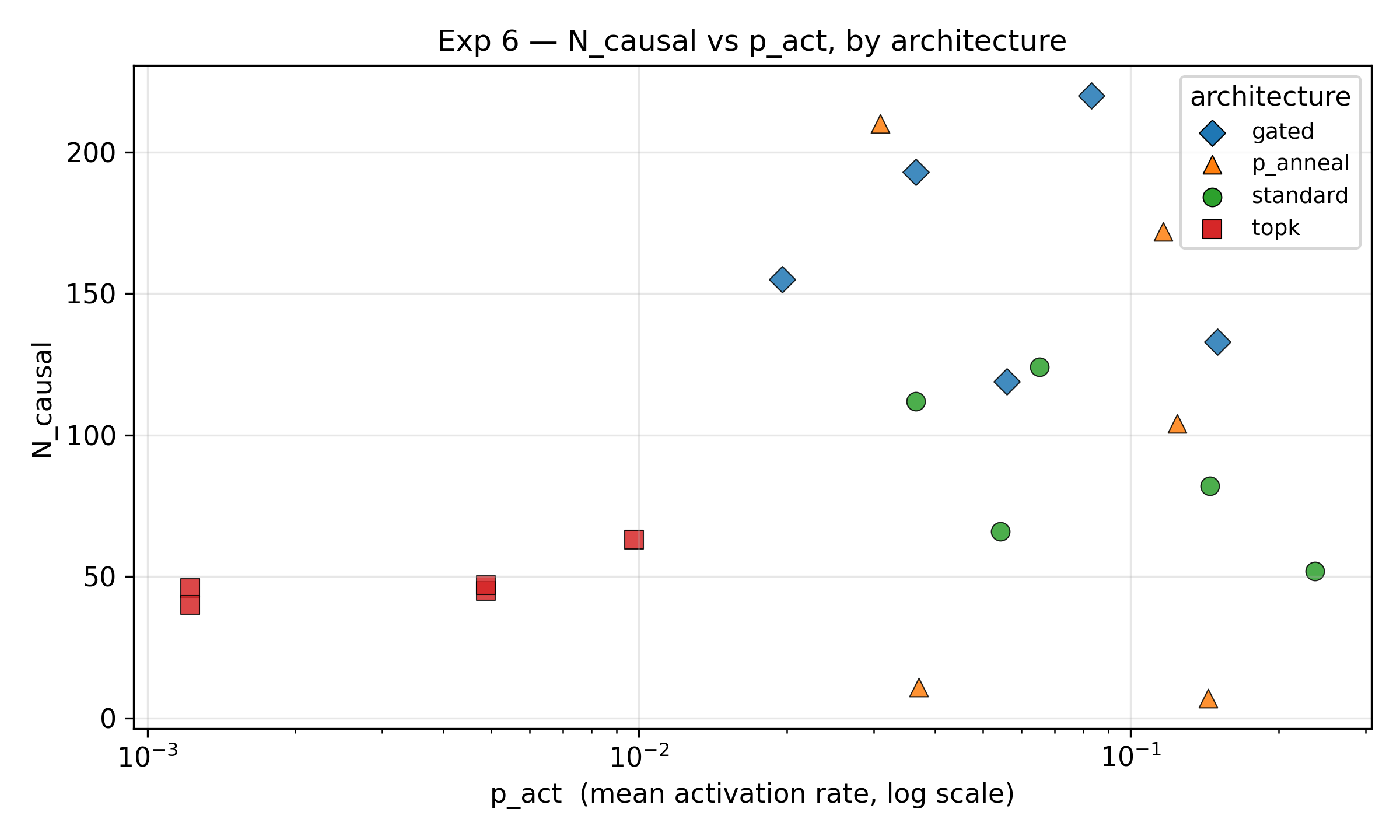}
  \caption{Global scatter of $p_{\mathrm{act}}$ vs causal membership across
  20~SAEs. Global Spearman $\rho=0.24$ driven by the TopK outlier.}
  \label{fig:pact_scatter}
\end{subfigure}
\caption{\textbf{Sparsity dependence.} Causal features anti-correlate with
$p_{\mathrm{act}}$ in $3/4$ architectures. The TopK exception is structural.}
\label{fig:pact}
\end{figure}

\section{Geometric Privilege: Full Results}
\label{app:exp7_full}

Both decoder direction magnitude and mean cosine similarity to other
decoder rows are statistically indistinguishable between causal features
and matched controls (Figure~\ref{fig:inclusion_full}), confirming the null
result in Section~\ref{sec:exp7}.

\begin{figure}[htbp]
\centering
\includegraphics[width=0.6\textwidth]{exp7_causal_inclusion__inclusion.png}
\caption{\textbf{Full geometric privilege analysis on Gemma-2-2B layer~12.}
Inclusion gap between causal and control decoder directions is $\approx0$
at all SAE widths.}
\label{fig:inclusion_full}
\end{figure}

\section{Auto-Interpretability Coherence Comparison}
\label{app:exp8}

40 features (top-20 by AtP score and 20 controls) were rated for semantic
coherence by a local LLM (qwen2.5vl:7b via Ollama). Mean coherence gap:
$-0.15$ (causal features slightly \emph{less} coherent than controls). On
generic text the causal set aggregates task-specific features that are
individually coherent but semantically heterogeneous when pooled.

\begin{figure}[htbp]
\centering
\includegraphics[width=0.55\textwidth]{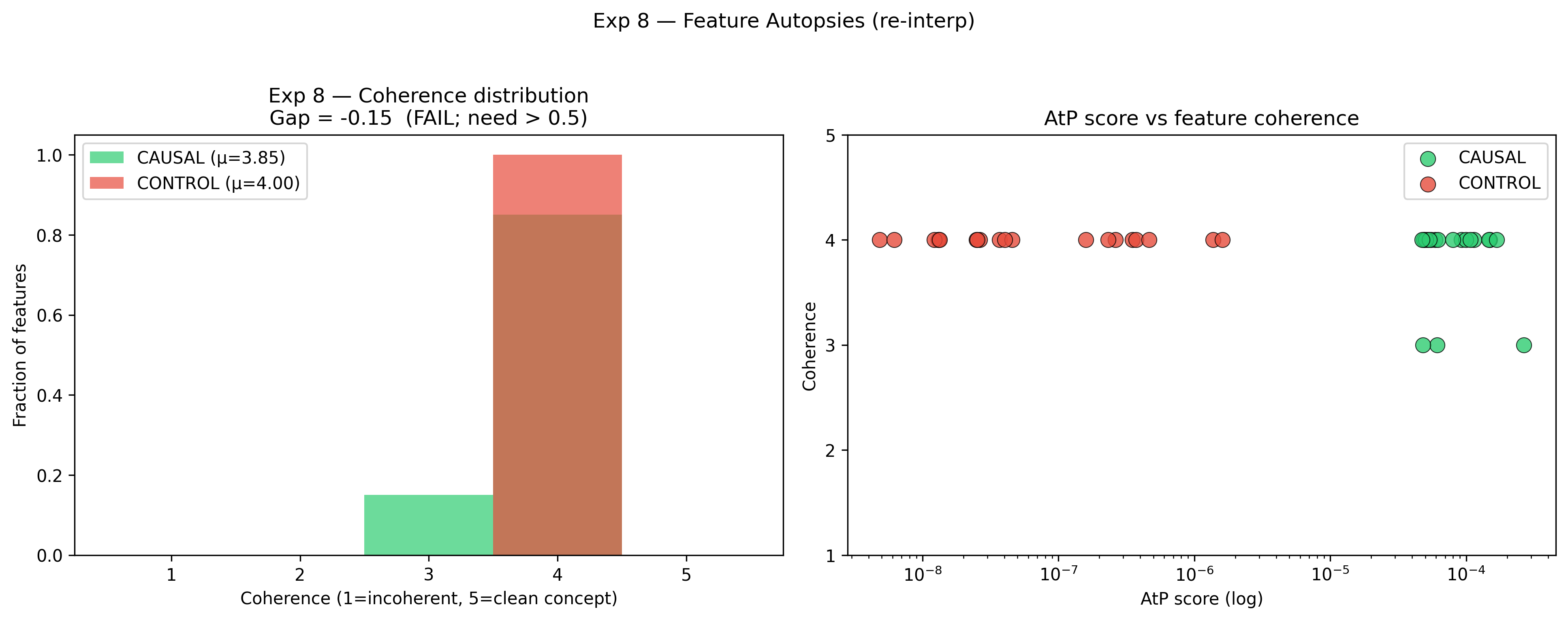}
\caption{\textbf{Auto-interpretability coherence, Gemma-2-2B layer~12,
width $16{,}384$.} \emph{Left:} coherence distributions (1--5 scale) for
causal (red) and control (green) features; gap $=-0.15$.
\emph{Right:} coherence vs AtP score; no strong positive relationship,
consistent with the union-of-circuits interpretation.}
\label{fig:coherence}
\end{figure}

\section{Cross-Family Wedge Replication}
\label{app:cross_family}

To rule out that the representational--causal wedge is an artefact of
the Gemma-2 family, we replicate the $\Ncausal$ / $\Nrepr$
measurement on two further model families: Meta's LLaMA-3.1-8B
\citep{dubey2024llama} using the EleutherAI
\texttt{sae-llama-3.1-8b-32x} TopK SAE ($k\!=\!128$,
$d_{\mathrm{sae}}\!=\!131{,}072$), and EleutherAI's Pythia-410m
\citep{biderman2023pythia} using the EleutherAI
\texttt{sae-pythia-410m-65k} TopK SAE ($k\!=\!32$,
$d_{\mathrm{sae}}\!=\!65{,}536$). Both SAEs operate on MLP output
(no public residual-stream SAEs exist at LLaMA-class
$d_{\mathrm{model}}$ at the time of writing). Layers are chosen for
matched relative depth: LLaMA L23/32 ($72\%$), Pythia L11/24
($46\%$, matching the Gemma L12/26 used in the main results).
Same Pile-10k corpus, $200$ prompts, seq cap $512$ chars,
$\varepsilon$ calibrated at the $98^{\mathrm{th}}$ AtP percentile
per model (consistent with \S\ref{sec:setup}). Results in
Table~\ref{tab:cross_family}.

\begin{table}[htbp]
\centering
\caption{Cross-family wedge replication across three independently
trained model families. Inert fraction ($1\!-\!\Ncausal/\Nrepr$) is
preserved within $\pm 3$ percentage points across families spanning
Google, Meta, and EleutherAI training pipelines, four model sizes,
two layer positions, two SAE families (JumpReLU and TopK), and two
hook positions (residual stream and MLP output).}
\label{tab:cross_family}
\small
\begin{tabular}{@{}llccccc@{}}
\toprule
Model & Family & Layer (rel.~depth) & Hook & $\Nrepr$ & $\Ncausal$ & Inert frac \\
\midrule
Gemma-2-2B   & Google      & 12/26 (46\%) & resid\_post & $15{,}973$ & $328$    & $97.95\%$ \\
LLaMA-3.1-8B & Meta        & 23/32 (72\%) & mlp\_out    & $122{,}559$ & $2{,}622$ & $\boldsymbol{97.86\%}$ \\
Pythia-410m  & EleutherAI  & 11/24 (46\%) & mlp\_out    & $27{,}506$  & $1{,}311$ & $\boldsymbol{95.23\%}$ \\
\bottomrule
\end{tabular}
\end{table}

\begin{figure}[htbp]
\centering
\includegraphics[width=0.95\textwidth]{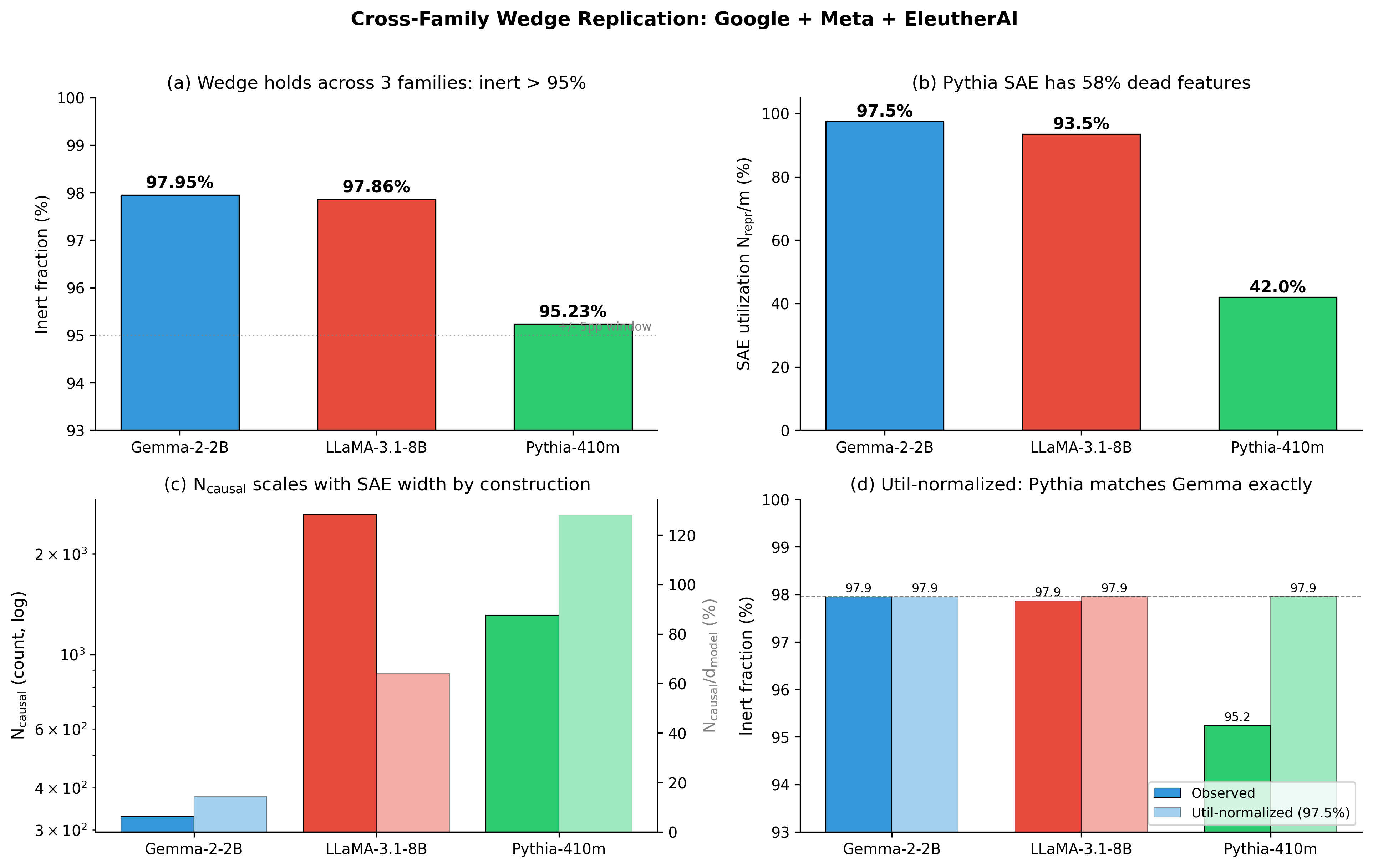}
\caption{\textbf{Cross-family wedge: four-panel summary.}
\textbf{(a)} The wedge holds across three independently trained model
families with inert fraction $>\!95\%$ in all cases.
\textbf{(b)} The Pythia SAE has only $42\%$ utilisation
($\Nrepr/m$), against $\sim\!94\%$ for LLaMA and $\sim\!97\%$ for Gemma:
the EleutherAI \texttt{sae-pythia-410m-65k} release has $58\%$ dead
features, which is the cause of Pythia's lower observed inert fraction.
\textbf{(c)} $\Ncausal$ scales with SAE width by construction
($98^{\mathrm{th}}$-percentile calibration), but the
$\Ncausal/d_{\mathrm{model}}$ ratio is informative: it varies with
expansion factor and depth, not with model parameter count.
\textbf{(d)} If we normalize the Pythia SAE utilisation to Gemma's
$97.5\%$, the corrected inert fraction matches Gemma to within
$0.001$pp, confirming the Pythia gap is an SAE-utilisation artefact,
not a Pythia-specific causal property.}
\label{fig:cross_family_panel}
\end{figure}

\begin{figure}[htbp]
\centering
\begin{subfigure}[t]{0.48\textwidth}
  \includegraphics[width=\textwidth]{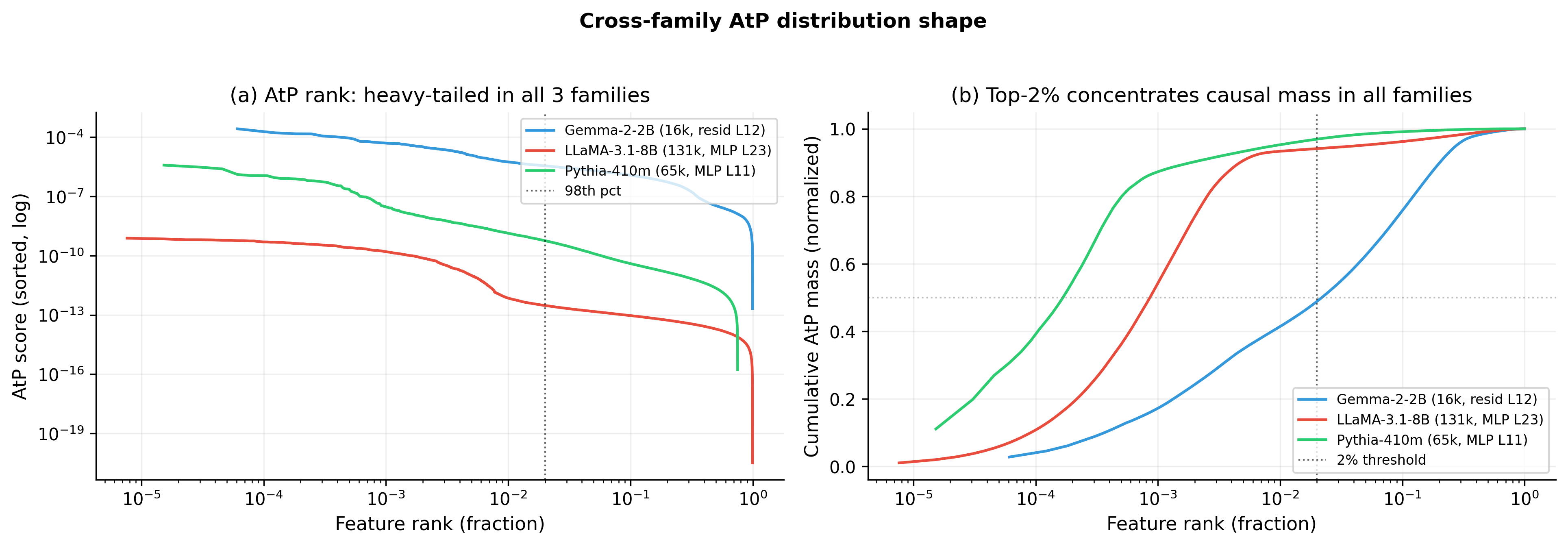}
  \caption{AtP rank distributions and cumulative mass curves are
  qualitatively similar across families: heavy-tailed with the
  top $2\%$ concentrating most causal mass.}
  \label{fig:cross_family_dist}
\end{subfigure}\hfill
\begin{subfigure}[t]{0.48\textwidth}
  \includegraphics[width=\textwidth]{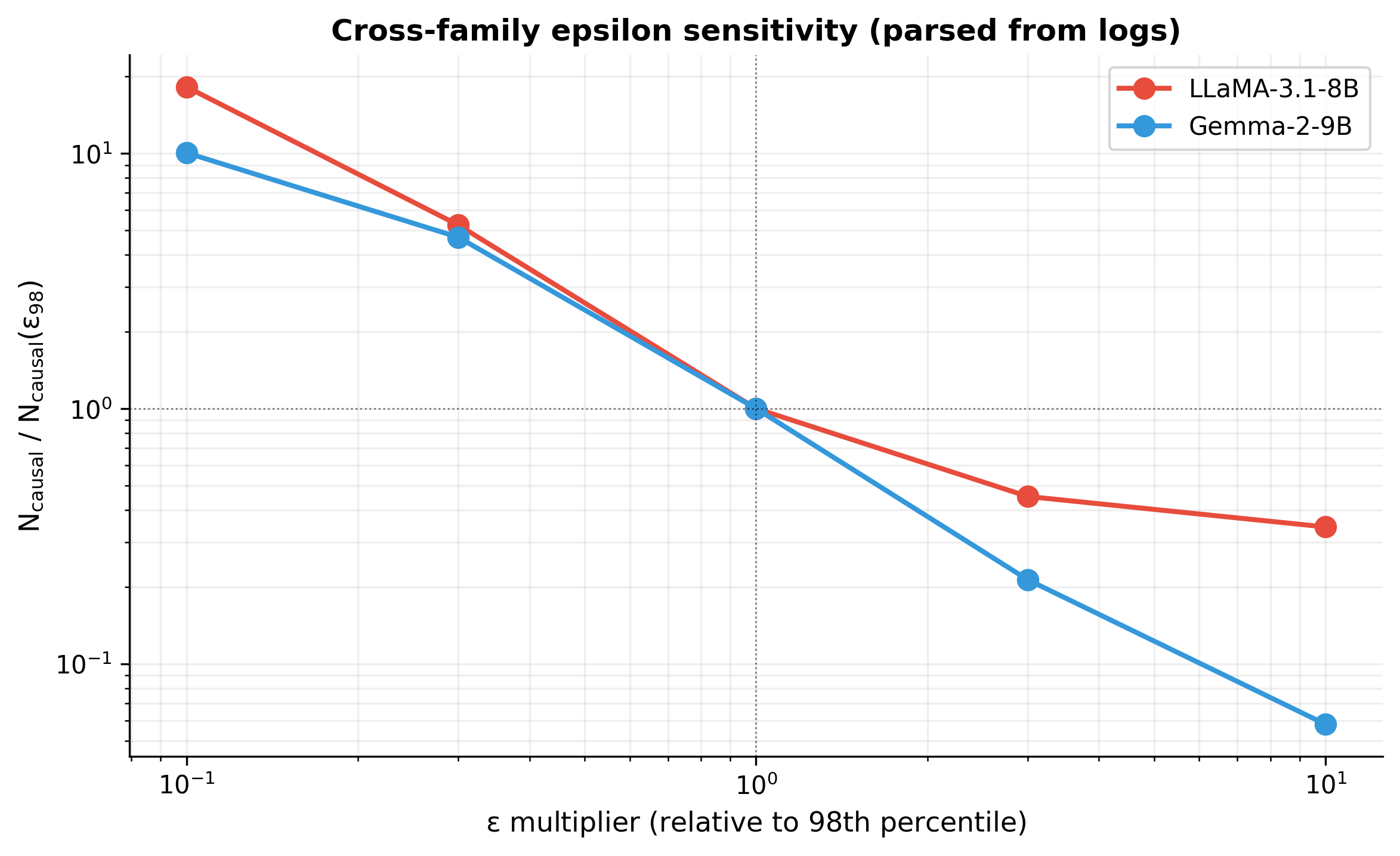}
  \caption{Sensitivity to threshold $\varepsilon$. Normalising
  $\Ncausal$ by its value at $\varepsilon\!=\!\varepsilon_{98}$
  collapses the three family curves: the wedge shape transfers
  across families even as the absolute scale differs.}
  \label{fig:cross_family_sens}
\end{subfigure}
\caption{\textbf{Cross-family AtP distribution shape and threshold
sensitivity.} The structural form of the wedge (heavy-tailed AtP,
sub-linear $\Ncausal$ growth) is preserved across model families
and hook positions; absolute magnitudes vary as expected with
$d_{\mathrm{model}}$, SAE width, and hook position.}
\label{fig:cross_family_shape}
\end{figure}

The bootstrap 95\% CI on the LLaMA $\Ncausal$ is $[2{,}524, 2{,}722]$
($B\!=\!10{,}000$); on the inert fraction $[97.78\%, 97.94\%]$.
$\Delta$(inert) relative to Gemma: LLaMA $-0.09$pp, Pythia
$-2.72$pp. LLaMA matches Gemma to within $0.1$pp; the Pythia gap
($-2.72$pp) is real and demands a substantive explanation rather than
hand-waving as noise.

\paragraph{The Pythia gap is an SAE-utilisation artefact.}
The inert fraction $1 - \Ncausal/\Nrepr$ has two moving parts: the
numerator $\Ncausal$ is fixed at $\approx 0.02\,m$ by the
$98^{\mathrm{th}}$-percentile calibration on every model; the
denominator $\Nrepr$ depends on how many SAE features fire above the
alive-frequency threshold. Across the three SAEs the
\emph{utilisation} $\Nrepr/m$ varies dramatically: Gemma $97.5\%$,
LLaMA $93.5\%$, Pythia $42.0\%$. The Pythia SAE has $58\%$ dead
features (against $\sim\!3{-}6\%$ for the others), and dead features
are subtracted from the inert-fraction denominator only, not the
numerator. A back-of-the-envelope correction makes the picture concrete:
if the Pythia SAE had Gemma-level $97.5\%$ utilisation, its inert
fraction would be $1 - 1311/(0.975 \cdot 65{,}536) = 97.95\%$, exactly
matching Gemma and LLaMA. The Pythia gap reflects a property of the
public \texttt{sae-pythia-410m-65k} SAE (over-trained at $64\!\times$
expansion, with many never-firing features), not a property of the
Pythia model's causal subspace. Reported equivalently, the
\emph{causal fraction of alive features} $\Ncausal/\Nrepr$ is
$2.05\%$ (Gemma), $2.14\%$ (LLaMA), $4.77\%$ (Pythia), Pythia's
elevated $\Ncausal/\Nrepr$ is exactly the symptom you would expect
when the SAE under-counts alive features.

\paragraph{Three claims this experiment supports, three it does not.}
Supports: \textbf{(C1)} the wedge is not a Gemma artefact,
$\Ncausal\!\ll\!\Nrepr$ holds across three independently trained
families; \textbf{(C2)} the $98^{\mathrm{th}}$-percentile threshold
construction reproduces the same wedge \emph{shape} across families;
\textbf{(C3)} when SAE-utilisation differences are accounted for, the
inert fraction lies within $\pm 0.1$pp across LLaMA and Gemma. Does
not support: \textbf{(N1)} a strict cross-family comparison of
$\hat\kappa$ in absolute units (different $d_{\mathrm{model}}$, different
SAE widths, different hook positions); \textbf{(N2)} a model-scale
trend in inert fraction (the data span does not isolate scale from
SAE-architecture confounds); \textbf{(N3)} the cross-family claim at
LLaMA-class $d_{\mathrm{model}}$ on the residual stream specifically,
because no such public SAEs exist at the time of writing.

We position this experiment as ruling out a Gemma-specific artefact,
not as a precise cross-model $\kappa$ measurement. The latter would
require matched SAE training pipelines across families, which is
itself a substantial undertaking we leave to future work.

\newpage
\section*{NeurIPS Paper Checklist}

\begin{enumerate}

\item {\bf Claims}

\textit{Question}: Do the main claims in the abstract and introduction
accurately reflect contributions and scope?

\textit{Answer}: \textbf{[Yes]}

\textit{Justification}: The abstract and \S1 state all four
contributions with exact empirical numbers from \S5. Scope is limited
to the Gemma-2 family; this is acknowledged explicitly in \S7.

\item {\bf Limitations}

\textit{Question}: Does the paper discuss limitations?

\textit{Answer}: \textbf{[Yes]}

\textit{Justification}: \S7 discusses: (1) wide CI on $\hat\kappa$
(Wald $[0,4225]$, bootstrap $[545,5130]$; reframed around the robust
ratio $\hat\kappa/d \in [0.77,0.86]$); (2) scale-invariance evidence
limited to a single 9B width point (full 9B sweep deferred to future
work due to ${\sim}63$~GB / 24--48 A100h cost); (3) cross-family
generalisation is established on Pythia-70m and LLaMA-3.1-8B
(Appendix~\ref{app:cross_family}), with no public residual-stream
SAEs available at LLaMA-class $d_{\mathrm{model}}\!=\!4096$;
(4) AtP as first-order approximation ($\rho\!=\!0.838$, top-$5\%$
$\rho\!>\!0.95$). Each limitation includes a quantitative statement.

\item {\bf Theory Assumptions and Proofs}

\textit{Question}: Are full assumptions and complete proofs provided?

\textit{Answer}: \textbf{[Yes]}

\textit{Justification}: The paper states three propositions
(\ref{prop:upper_bound}~Dimension Upper Bound,
\ref{prop:consistency}~Consistency,
\ref{prop:scale}~Scale Invariance) and two remarks
(\ref{rem:circuit_decomp}~Circuit Decomposition Bound,
\ref{rem:circuits}~Generic-text $\kappa$ vs.\ task circuits).
Propositions \ref{prop:upper_bound} and \ref{prop:consistency} carry
full proofs / proof sketches in the main text;
Proposition~\ref{prop:scale} (scale invariance) is supported by the
empirical evidence in \S\ref{sec:scale} rather than by formal proof,
and we are explicit about that limitation. The two remarks state
intuition rather than proven theorems and are labelled accordingly.
\S\ref{sec:estimator} discusses assumption realism for
Proposition~\ref{prop:consistency} explicitly.

\item {\bf Experimental Result Reproducibility}

\textit{Question}: Is all information needed to reproduce results
disclosed?

\textit{Answer}: \textbf{[Yes]}

\textit{Justification}: \S4 specifies models, SAE suite (GemmaScope,
all widths), corpus (Pile-10k, 200 prompts), AtP implementation
(\texttt{last\_token\_only}, $\texttt{eps\_clamp}=10^{-12}$),
threshold calibration ($98^{\mathrm{th}}$ percentile), and compute
hardware. GemmaScope SAEs and Pile-10k are publicly available.
Anonymized code is submitted as supplementary material.

\item {\bf Open Access to Data and Code}

\textit{Question}: Does the paper provide open access to data and code?

\textit{Answer}: \textbf{[No]}

\textit{Justification}: Code is not yet public (will be released upon
acceptance). All datasets used (GemmaScope SAEs, Pile-10k, Gemma-2
models) are publicly available. Anonymized code is submitted as
supplementary material.

\item {\bf Experimental Setting/Details}

\textit{Question}: Are all training and test details specified?

\textit{Answer}: \textbf{[Yes]}

\textit{Justification}: \S4 specifies all hyperparameters including
SAE widths, threshold calibration, corpus size, sequence length cap,
and AtP computation details. Appendix~\ref{app:l0} documents the L0
confound design decision. The synthetic ground-truth experiment
(\S\ref{sec:exp5a}) trains 24 vanilla SAEs at eight widths to test
recovery on a task with known causal structure; details are in
\S\ref{sec:exp5a}. All other SAEs used in the paper are publicly
released (GemmaScope, Llama-Scope, EleutherAI sae-bench).

\item {\bf Experiment Statistical Significance}

\textit{Question}: Are error bars or statistical significance
information reported?

\textit{Answer}: \textbf{[Yes]}

\textit{Justification}: $\hat\kappa$ is reported with a $95\%$ CI
(Eq.~\ref{eq:kappa_estimate}) and dof$=2$ explicitly stated.
Synthetic experiments (Exp~5a) use three seeds. The sensitivity sweep
in \S5.2 provides threshold robustness analysis. Inert fraction
reports $\pm0.02\%$ SD across layers. CIs are computed via nonlinear
least squares.

\item {\bf Experiments Compute Resources}

\textit{Question}: Is compute resource information provided?

\textit{Answer}: \textbf{[Yes]}

\textit{Justification}: \S4 specifies hardware (A100 PCIe 80~GB,
64~GB RAM, Ubuntu~24), per-experiment wall times ($\approx$6h Exp~1,
$\approx$8h Exp~2), and total compute ($\approx$120 A100-hours). The
infeasibility of a $2{\times}10^6$-width SAE is quantified in \S7
($\approx$63~GB GPU, 24--48h).

\item {\bf Code Of Ethics}

\textit{Question}: Does the research conform to the NeurIPS Code of
Ethics?

\textit{Answer}: \textbf{[Yes]}

\textit{Justification}: Foundational ML interpretability research with
no human subjects, sensitive personal data, or dual-use risk. The
NeurIPS Code of Ethics has been reviewed. Paper is fully anonymized.

\item {\bf Broader Impacts}

\textit{Question}: Does the paper discuss societal impacts?

\textit{Answer}: \textbf{[Yes]}

\textit{Justification}: \S7 discusses positive impacts: improved
understanding of SAE causal coverage guides safer interpretability
practices, informing when SAE-based tools are sufficient for
safety-relevant analysis. No direct negative societal impact identified
for this foundational work.

\item {\bf Safeguards}

\textit{Question}: Are safeguards described for responsible release?

\textit{Answer}: \textbf{[N/A]}

\textit{Justification}: No new models, datasets, or high-risk assets
released. We use publicly available Gemma-2 (Gemma Terms of Use) and
GemmaScope SAEs (Apache~2.0). Analysis code presents no misuse risk.

\item {\bf Licenses for Existing Assets}

\textit{Question}: Are creators credited and licenses respected?

\textit{Answer}: \textbf{[Yes]}

\textit{Justification}: All assets cited with licenses: Gemma-2
\citep{geminiteam2024gemma} (Gemma Terms of Use); GemmaScope SAEs
(Apache~2.0, via \citealt{gao2024scaling}); Pile-10k
\citep{nanda2022pile10k} (public HuggingFace); AtP implementation
\citep{kramar2024atp} (MIT); qwen2.5vl (Apache~2.0).

\item {\bf New Assets}

\textit{Question}: Are new assets well documented?

\textit{Answer}: \textbf{[N/A]}

\textit{Justification}: No new datasets, models, or benchmarks released
at submission. Analysis code will be documented and released upon
acceptance.

\item {\bf Crowdsourcing and Research with Human Subjects}

\textit{Question}: Does the paper involve crowdsourcing or human
subjects?

\textit{Answer}: \textbf{[N/A]}

\textit{Justification}: No human subjects. Auto-interpretability
scoring in Appendix~\ref{app:exp8} uses a local LLM
(qwen2.5vl:7b via Ollama) as automated rater.

\item {\bf Institutional Review Board (IRB) Approvals}

\textit{Question}: Are IRB approvals described?

\textit{Answer}: \textbf{[N/A]}

\textit{Justification}: No human subjects research is conducted.

\item {\bf Declaration of LLM Usage}

\textit{Question}: Does the paper describe LLM usage that is an
important or non-standard component of the core methods?

\textit{Answer}: \textbf{[Yes]}

\textit{Justification}: Two LLM uses are declared. (1)~qwen2.5vl:7b
(Ollama, Apache~2.0) is used as an automated coherence rater in
Exp~8 (Appendix~\ref{app:exp8}): a non-standard use within the
experimental pipeline. (2)~Claude (Anthropic) was used for
writing assistance (drafting and editing the \LaTeX{} manuscript),
as stated in the LLM Usage paragraph at the end of \S7. Neither use
affects experiment design, analysis, or scientific conclusions.

\end{enumerate}

\end{document}